\crefname{hypothesis}{Hypothesis}{Hypotheses}
\title{Coupled Input-Output Dimension Reduction:\\Application to goal-oriented Bayesian experimental design and global sensitivity analysis}
\author{
Qiao Chen\thanks{Université Grenoble Alpes, Inria Grenoble, France (\email{elise.arnaud@univ-grenoble-alpes.fr}, \email{qiao.chen@inria.fr}, \email{olivier.zahm@inria.fr}).}
\and Élise Arnaud\footnotemark[1]
\and Ricardo Baptista\thanks{Computing and Mathematical Sciences, California Institute of Technology, Pasadena, CA (\email{rsb@caltech.edu}).}
\and Olivier Zahm\footnotemark[1]
}
\DeclareMathOperator{\Tr}{Tr}
\DeclareMathOperator{\supp}{supp}
\DeclareMathOperator{\Ker}{Ker}
\DeclareMathOperator{\Imag}{Im}
\DeclareMathOperator{\Cov}{Cov}
\DeclareMathOperator{\Var}{Var}
\DeclareMathOperator{\KL}{\mathbb{D}_{KL}}
\newcommand{\tot}{\textup{tot}}
\newcommand{\cl}{\textup{cl}}
\newcommand{\PCA}{\textup{PCA}}
\newcommand{\Coupled}{\textup{Coupled}}
\renewcommand*{\d}{\mathop{}\!\mathrm{d}}
\providecommand{\bbE}{\mathbb{E}}
\providecommand{\bbN}{\mathbb{N}}
\providecommand{\bbR}{\mathbb{R}}
\providecommand{\CC}{\mathcal{C}}
\providecommand{\CI}{\mathcal{I}}
\providecommand{\CN}{\mathcal{N}}
\newcommand{\VG}{{\mathbf{G}}}
\begin{document}

\maketitle

\begin{abstract}
We introduce a new method to jointly reduce the dimension of the input and output space of a function between high-dimensional spaces.
Choosing a reduced input subspace influences which output subspace is relevant and vice versa.
Conventional methods focus on reducing either the input or output space, even though both are often reduced simultaneously in practice.
Our coupled approach naturally supports goal-oriented dimension reduction, where either an input or output quantity of interest is prescribed.  
We consider, in particular, goal-oriented sensor placement and goal-oriented sensitivity analysis, which can be viewed as dimension reduction where the most important output or, respectively, input \emph{components} are chosen. 
Both applications present difficult combinatorial optimization problems with expensive objectives such as the expected information gain and Sobol' indices.
By optimizing gradient-based bounds, we can determine the most informative sensors and most influential parameters as the largest diagonal entries of some diagnostic matrices, thus bypassing the combinatorial optimization and objective evaluation.
 
\end{abstract}

\begin{keywords}
dimension reduction, Poincar\'{e} inequality, Cram\'{e}r-Rao inequality, derivative-based, goal-oriented, Bayesian optimal experimental design, global sensitivity anaylsis
\end{keywords}

\begin{MSCcodes}
65D40, 62F15, 62K05
\end{MSCcodes}

\section{Introduction}
High dimensionality poses a challenge in many fields of applied mathematics, seriously limiting the effectiveness of established solution methods. 
Tasks like approximation, sampling, and optimization become exponentially harder as the size of the respective exploration spaces explodes with growing dimensions.
And yet, the dimensionality of computational problems continues to rise due to improved processing capacities that allow for higher resolution simulations and an unprecedented accumulation of data. 
In response to this challenge, dimension reduction methods seek to leverage the underlying low-dimensional structures present in many problems.

Classical dimension reduction methods target either the input or the output space of a model. For example, truncated Karhunen-Loève expansion \cite{loeve1977KL} is often applied to reduce input parameters, while the reduced basis method \cite{almroth1978RB}, proper orthogonal decomposition \cite{sirovich1987turbulence,lumley1967structure} and principal component analysis (PCA) \cite{hotelling1933PCA} are used on the output states.
Derivative-based reduction methods like ours have been mostly proposed for the parameter space like the active subspace \cite{constantine2014kriging,zahm2020gradient} and likelihood informed subspace \cite{cui2014LIS,zahm2022certified} methods.
Although input and output reduction methods are often applied simultaneously, the two spaces are commonly treated separately (see e.g. \cite{alexanderian2019InputOutputKL,bhattacharya2021model,o2024derivative}). However, choosing an input subspace inevitably impacts the relevant output subspace and vice versa. Taking this interplay into account can significantly reduce computational cost by allowing lower-dimensional approximations with the same level of accuracy \cite{alexanderian2019InputOutputKL}.

Few works consider the coupling between the input and output space reduction, though the coupling is often only algorithmic and one-directional.
For example, \cite{lieberman2010ParameterStateRB} uses a greedy reduced basis method for output state reduction and re-purposes the snapshot parameters to span a reduced parameter space.
Conversely, \cite{cui2016ParameterStateLIS} employs likelihood informed parameter subspaces, and then reduces the output using PCA on snapshots sampled from a reduced parameter distribution.
The operator learning framework of \cite{lu2021learning} optimizes the output reduction (seen here as a linear decoder parametrized by neural networks) after fixing a reduced input, e.g., using Karhunen-Loève or active subspaces; see also the discussion in \cite{kovachki2024operator}.
\cite{baptista2022JointDR} derive error bounds for joint input-output dimension reduction which permits one to balance the contribution of the two reductions to meet an overall level of accuracy.

The coupling between input and output space is also the motivation behind goal-oriented methods. These approaches tailor the dimension reduction of the input or output space to specific quantities of interest in the opposite space to enhance computational efficiency and significance of the results. 
For example, optimal sensor placement can be viewed as dimension reduction that selects the most informative output \emph{components}. The goal-oriented optimal sensor placement problem \cite{feng2019goalOED,alexanderian2018GoalOED,chen2021GoalEIG} seeks optimality with respect to a prescribed lower dimensional parameter of interest (instead of the whole parameter). On the other hand, sensitivity analysis can be viewed as input dimension reduction where parameter components that cause the most (or least) output variations are identified. Goal-oriented sensitivity analysis considers sensitivities with respect to a specified output of interest.

\subsection{Contribution and Outline}

We develop a coupled dimension reduction method that accounts for the interdependence between the input and output space of a non-linear function $G\colon\bbR^d\rightarrow \bbR^m$.
As a consequence of the coupling, our method naturally supports goal-oriented dimension reduction. 
Using the Poincar\'{e} and a Cram\'{e}r-Rao-like inequality, we establish gradient-based upper and lower bounds for several joint and goal-oriented dimension reduction objectives. These bounds provide us with computable error estimates and an easy optimization algorithm based on the eigendecomposition of two diagnostic matrices
\begin{align*}
	H_X(V_s) &= \bbE\left[ \nabla G(X)^\top V_s  V_s^\top \nabla G(X) \right], \\
	H_Y(U_r) &= \bbE\left[ \nabla G(X) U_r  U_r^\top \nabla G(X)^\top \right],
\end{align*}
for the input and output space, respectively. Here, $\nabla G(X)\in\bbR^{m\times d}$ denotes the Jacobian of $G$, while $U_r\in\bbR^{d\times r}$ and $V_s\in\bbR^{m\times s}$ are orthogonal matrices spanning the reduced input and output space, respectively. We note that each diagnostic matrix is a function of a specified subspace in the opposing space. In the goal-oriented case where a fixed $U_r$ prescribes a parameter of interest $U_r^\top X$, the optimal $V_s^*(U_r)$ consists of the dominant eigenvectors of $H_Y(U_r)$. Equivalently, for a given output of interest $V_s^\top G(X)$, the optimal $U_r^*(V_s)$ consists of the dominant eigenvectors of $H_X(V_s)$. To obtain coupled subspaces $(U_r^*, V_s^*)$, we propose an alternating eigendecomposition of both diagnostic matrices.

In the goal-oriented optimal sensor placement scenario, we seek $V_{\tau}(U_r) = [e_{\tau}^m]$ consisting of canonical basis vectors $e_i^m\in\bbR^m$ indexed by $\tau\subseteq\{1,\ldots,m\}$. By maximizing a derivative-based lower bound of the expected information gain, we obtain the optimal $\tau^*$ as the index set of the largest diagonal entries of $H_Y(U_r)$. Similarly, in the goal-oriented sensitivity analysis scenario, we seek $U_{\tau}(V_r) = [e_{\tau}^d]$ consisting of canonical basis vectors $e_i^d\in\bbR^d$. The solution $\tau^*\subseteq\{1,\ldots,d\}$ that optimizes a derivative-based bound of the total Sobol' index is given as the largest diagonal entries of $H_X(V_s)$.
In both applications, we circumvent the expensive evaluation of the objective functions and the combinatorial optimization problem by maximizing our derivative-based bounds.

The remainder of this article is organized as follows. We begin in \Cref{sec:Objectives} by introducing several coupled and goal-oriented dimension reduction objectives. \Cref{sec:Coupled} contains our main results. We derive derivative-based upper and lower bounds for the input-output dimension-reduction objectives, as well as an algorithm that computes the optimal coupled subspaces with respect to the bounds. In \Cref{sec:Goal}, we consider two goal-oriented applications. In particular, we look at goal-oriented Bayesian optimal experimental design in \Cref{subsec:GoalBOED} and goal-oriented global sensitivity analysis in \Cref{subsec:GoalGSA}.
We finally demonstrate our theoretical results on numerical examples in \Cref{sec:Numerics}.

\section{Problem Setting and Objectives}\label{sec:Objectives}

Let $G\colon\bbR^d\rightarrow\bbR^m$ be a deterministic non-linear function of a random input parameter vector $X$. We reduce the input vector $X$ and output $Y=G(X)$ by retaining only some low-dimensional projections
\begin{align*}
	X_r &= U_r^\top X,\\
	Y_s &= V_s^\top Y ,
\end{align*}
where $U_r\in\bbR^{d\times r}$, $V_s\in\bbR^{m\times s}$ are two matrices with $r\ll d$ and $s\ll m$ orthonormal columns.
The identification of $U_r$ and $V_s$ depends on the objective at hand. 
In the following, we introduce four potential objectives. The first two consider general function approximation and Bayesian inference problems, while the second two consider goal-oriented applications of our framework.

\paragraph{Reduced order modelling}
In reduced order modelling (ROM), $G$ is typically a parametrized partial differential equation (PDE) model taking parameter $X$ and returning the (discretized) solution $G(X)$ of the PDE.
ROM aims to construct a fast-to-evaluate approximation $\widetilde G$  that can be used in place of $G$ for real-time or multiple-query problems such as control, optimization or sampling. A possible formalization of the problem is to find matrices $U_r$ and $V_s$ which minimize the $L^2$-error
\begin{align}
	\min_{U_r, V_s}\; &\bbE\bigl[\|G(X)-\widetilde G(X)\|^2\bigr], \tag{O1}\label{eq:objective_L2}\\
	\text{where}\quad &\widetilde G(x) = V_s\, \widetilde g(U_r^\top x) +  \widetilde G_\perp ,\label{eq:def_G_tilde}
\end{align}
for some optimal low-dimensional function $\widetilde g\colon\bbR^r\rightarrow\bbR^s$ and vector $\widetilde G_\perp\in\bbR^m$ to be specified later in \Cref{thm:optimalG}.
If the minimum value of the objective \eqref{eq:objective_L2} is close to 0, then $G\approx\widetilde G$ is essentially constant along the $(d-r)$-dimensional subspace $\Ker(U_r^\top)$ and the variation in its output is confined in the $m$-dimensional affine subspace $\Imag(V_s) +  \widetilde G_\perp$.

\paragraph{Bayesian inference}
In Bayesian inference, the aim is to compute statistics for the posterior distribution $\pi_{X|Y}$.
Existing knowledge on $X$, represented by the prior distribution $\pi_X$, is thereby updated with new information from a noisy observation $Y = G(X) + \eta$, 
where $\eta\sim\mathcal{N}(0,\sigma^2 I_m)$ denotes some additive Gaussian noise with $I_m\in\bbR^{m\times m}$ being the identity matrix. While one may consider more general likelihood functions, we will restrict ourselves to the setting of additive and Gaussian observational noise in this work.
By this definition, the likelihood $\pi_{Y|X}$ is Gaussian so that Bayes theorem yields the posterior density, which is known up to a normalization constant, as
\begin{equation*}
 \pi_{X|Y}(x|y) \propto \exp\left(-\tfrac{1}{2\sigma^2} \left\|y - G(x)  \right\|^2 \right) \pi_X(x).
\end{equation*}
In the case of a high-dimensional $X$, standard Markov-chain Monte Carlo (MCMC) methods need an excessive amount of iterations (and thus of model evaluations) to accurately sample the posterior.
Reducing the dimension of $X$ aims at identifying the components $X_r$ which are most \emph{informed} by the data $Y$ in order to run the inference algorithm, e.g., MCMC on a low-dimensional subspace \cite{cui2014LIS}.
Reducing the dimension of $Y$ permits one to only consider the most \emph{informative} data $Y_s$, typically yielding smaller autocorrelation between posterior samples and further reducing the number of required MCMC iterations \cite{baptista2022JointDR}.
We formalize this problem as finding $U_r$ and $V_s$ that minimize the posterior error with respect to the expected Kullback-Leibler (KL) divergence
\begin{align}
	\min_{U_r, V_s}\; &\bbE_{Y}\left[\KL\left(\pi_{X|Y}\|\,\widetilde\pi_{X|Y}\right)\right], \tag{O2}\label{eq:objective_posterior}\\
	\text{where}\quad &\widetilde \pi_{X|Y}(x|y) \propto \pi_{Y_s|X_r}(y_s|x_r)\pi_{X}(x)  ,\label{eq:def_pi_tilde}
\end{align}
with $\KL(\nu\|\mu) = \int\log(\d\nu/\d\mu)\d\nu$.
For \eqref{eq:objective_posterior} to be close to $0$, the parameter $X$ needs to be essentially independent of the observations in the subspace $\Ker(V_s)$, and observations of $Y$ should primarily provide information within the parameter subspace $\Imag(U_r)$.  
To sample from the approximate density $\widetilde\pi_{X|Y}$, we can independently sample $X_r\in\bbR^r$, which depends on the projected observation (see algorithms to sample the reduced posterior in~\cite[Section 6]{baptista2022JointDR}), and its orthogonal complement $X_\perp\in\bbR^{d-r}$, which only depends on the prior.

\paragraph{Goal-oriented Bayesian optimal experimental design}
Bayesian optimal experimental design (BOED) aims to select a set of sensors $\tau \subseteq \{1, \ldots, m\}$ with $|\tau|=s$ that maximizes the information gained about the parameter that is to be inferred. Measurements from the sensors $\tau$ are gathered in the vector $Y_\tau= V_\tau^\top Y$, where $V_\tau = [e_\tau^m]\in \bbR^{m\times s}$ contains the canonical basis vectors $e_{i}^m\in\bbR^m$ indexed by $\tau$. The information content of the experimental design $V_\tau$ can be quantified by the expected information gain (EIG), which is defined as the KL-divergence from the prior to the posterior in expectation over the data $\Phi(V_\tau)=\bbE_{Y}[\KL(\pi_{X|Y_\tau}\|\,\pi_{X})]$. When the inference problem focuses on a specific linear quantity of interest in the parameter $X_r = U_r^\top X$, the sensor placement can be improved by maximizing the \emph{goal-oriented EIG}
\begin{equation}\label{eq:objective_goalEIG}
	\max_{V_\tau}\;\Phi(V_\tau|U_r) = \max_{V_\tau}\;\bbE_{Y}\left[\KL\left(\pi_{X_r|Y_\tau}\|\,\pi_{X_r}\right) \right] , \tag{O3}
\end{equation}
where $\pi_{X_r|Y_\tau}$ and $\pi_{X_r}$ are the corresponding marginalized densities. BOED is notoriously challenging as each evaluation of the EIG requires a costly nested Monte-Carlo estimator \cite{ryan2003NMC_EIG}. Existing methods often resort to linear Gaussian models for which the EIG can be computed explicitly \cite{alexanderian2016fast}. 
Nevertheless, it remains a combinatorial optimization problem that typically only allows for sub-optimal greedy solutions \cite{krause2008GreedySensorsGP,cohen2018GreedySensorsPBDW}.

\paragraph{Goal-oriented global sensitivity analysis}
Assuming independent inputs $X_i$, $i=1,\ldots,d$, global sensitivity analysis (GSA) aims to identify which set $\tau\subseteq\{1,\ldots,d\}$ of $r=|\tau|$ input factors causes the least output variations in order to fix them to a nominal value and simplify the model. A widely used sensitivity index is the total Sobol' index $S_\tau^{\tot} = 1-\frac{\Tr(\Cov( G(X)| X_{-\tau} ))}{\Tr(\Cov( G(X)  ))}$ with $X_{-\tau} = {U_{-\tau}}^\top X\in\bbR^{d-r}$ where $U_{-\tau} = [e_{-\tau}^d]\in\bbR^{d\times (d-r)}$ contains the canonical basis vectors $e_{i}^d\in\bbR^d$ indexed by the complement of $\tau$ \cite{gamboa2014multidim}. A smaller $S_\tau^{\tot}$ implies that the output is less sensitive to the parameters indexed by $\tau$. Thus, the \textit{optimal} set of input factors $\tau$ for factor fixing minimize the total Sobol' index $S_\tau^{\tot}$.
When there are particular output directions of interest $V_s^\top G(X)$, it is more suitable to consider minimizing the \emph{goal-oriented total Sobol' index}
\begin{equation}\label{eq:objective_goalGSA}
		\min_{U_\tau}\;S^{\tot}(U_\tau|V_s) 
		= \min_{U_\tau}\; 1-\frac{\Tr\left(\Cov\left(\left. V_s^\top G(X)\right| U_{-\tau}^\top X \right)\right)}{\Tr\left(\Cov\left( V_s^\top G(X)  \right)\right)}, \tag{O4}
\end{equation}
where $U_{\tau} = [e_\tau^d]\in\bbR^{d\times r}$.
GSA faces similar challenges as BOED. First, the objective function is expensive to evaluate and the development of fast computational methods is an ongoing research topic \cite{gamboa2022rank_statistics,qian2018MFMC}. Secondly, identifying the optimal set that minimizes the total Sobol' index poses a combinatorial problem. In practice, only single-element sets $\tau=\{ i\}$ are considered and Sobol' indices are computed for all $i=1,\ldots,d$ to rank them by importance.

\section{Coupled Input-Output Dimension Reduction}\label{sec:Coupled}

The following Lemmas allow us to address both the posterior approximation problem \eqref{eq:objective_posterior} as well as the goal-oriented BOED problem \eqref{eq:objective_goalEIG} by solving the $L^2$-approximation problem of the forward model \eqref{eq:objective_L2}. The proofs can be found in \Cref{app:L2vsPosterior,app:L2vsEIG}.

\begin{lemma}\label{Lem:L2vsPosterior}
	With the above notations, for any $U_r,V_s$ and $\widetilde G$ as in \eqref{eq:def_G_tilde}, we have
	\begin{align}\label{eq:KLposterior}
		\bbE_{Y}\left[\KL\left(\pi_{X|Y}\|\,\widetilde\pi_{X|Y}\right) \right]
		\leq \tfrac{1}{2\sigma^2} \bbE\big[ \| G(X) - \widetilde G(X) \|^2 \big].
	\end{align}
\end{lemma}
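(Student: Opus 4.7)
The plan is to recognize $\widetilde\pi_{X|Y}$ in \eqref{eq:def_pi_tilde} as the $M$-projection of the true posterior $\pi_{X|Y}$ onto a family $\mathcal{F}$ of structured conditional densities that also contains a natural Gaussian surrogate derived from $\widetilde G$. The bound then follows by applying a chain-rule KL inequality to the surrogate.

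\emph{Step 1 (M-projection).} Consider the family $\mathcal{F}$ of conditional densities of the form $q(x|y)\propto \pi_X(x)f(x_r,y_s)$, with $x_r=U_r^\top x$, $y_s=V_s^\top y$, and $f\geq 0$. I would parametrize $q$ through its induced conditional $p_q(x_r|y_s):=\pi_{X_r}(x_r)f(x_r,y_s)/c(y_s)$, where $c(y_s)=\int\pi_X(x)f(x_r,y_s)\d x$ is the normalizer. A direct computation using Bayes' rule $\pi_{Y_s|X_r}/\pi_{Y_s}=\pi_{X_r|Y_s}/\pi_{X_r}$ yields the identity
\begin{equation*}
\bbE_Y\bigl[\KL(\pi_{X|Y}\|q)-\KL(\pi_{X|Y}\|\widetilde\pi_{X|Y})\bigr] = \bbE_{Y_s}\bigl[\KL(\pi_{X_r|Y_s}(\cdot|Y_s)\,\|\,p_q(\cdot|Y_s))\bigr] \geq 0,
\end{equation*}
so $\widetilde\pi_{X|Y}$ (corresponding to $p_q=\pi_{X_r|Y_s}$) is the minimizer of $\bbE_Y[\KL(\pi_{X|Y}\|\cdot)]$ over $\mathcal{F}$.

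\emph{Step 2 (Gaussian surrogate and chain rule).} I would then introduce the auxiliary joint $\bar\pi_{X,Y}(x,y):=\pi_X(x)\,\mathcal{N}(y;\widetilde G(x),\sigma^2 I_m)$, whose conditional is $\bar\pi_{X|Y}(x|y)\propto \pi_X(x)\exp(-\tfrac{1}{2\sigma^2}\|y-\widetilde G(x)\|^2)$. Choosing $V_s^\perp\in\bbR^{m\times(m-s)}$ so that $[V_s,V_s^\perp]$ is orthogonal, the expansion
\begin{equation*}
\|y-\widetilde G(x)\|^2 = \|V_s^\top y-\widetilde g(x_r)-V_s^\top\widetilde G_\perp\|^2 + \|V_s^{\perp\top}(y-\widetilde G_\perp)\|^2
\end{equation*}
makes the $y_\perp$-piece independent of $x$, so it cancels in the normalized conditional and gives $\bar\pi_{X|Y}\in\mathcal{F}$. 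The chain rule on KL, together with $\bar\pi_X=\pi_X$, then yields
\begin{equation*}
\bbE_Y\bigl[\KL(\pi_{X|Y}\|\bar\pi_{X|Y})\bigr]\leq \KL(\pi_{X,Y}\|\bar\pi_{X,Y})=\bbE_X\bigl[\KL\bigl(\mathcal{N}(G(X),\sigma^2 I_m)\,\|\,\mathcal{N}(\widetilde G(X),\sigma^2 I_m)\bigr)\bigr]=\tfrac{1}{2\sigma^2}\bbE[\|G(X)-\widetilde G(X)\|^2],
\end{equation*}
the last equality being the closed-form Gaussian KL for laws with a shared covariance. Combining with Step 1 gives \eqref{eq:KLposterior}. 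I expect the main obstacle to be Step 1: it requires the right parametrization of $\mathcal{F}$ and a Bayes manipulation to surface a non-negative residual KL in the marginalized variables $(X_r,Y_s)$; Step 2 is then a routine chain-rule and Gaussian KL computation.
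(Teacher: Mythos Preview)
Your proposal is correct and follows essentially the same two-step structure as the paper's proof: the paper cites \cite{baptista2022JointDR} for the $M$-projection property in Step~1 and \cite{zahm2021datafree} for the $L^2$-to-KL bound in Step~2, whereas you supply both arguments inline (the residual-KL identity and the chain-rule/Gaussian-KL computation). Your family $\mathcal{F}$ is slightly narrower than the one in the cited proposition (you fix the $x_\perp$-factor to be $\pi_{X_\perp|X_r}$), but this is harmless since both the optimizer $\widetilde\pi_{X|Y}$ and the surrogate $\bar\pi_{X|Y}$ lie in the narrower class.
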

\begin{lemma}\label{Lem:L2vsEIG}
	For any $U_r,V_s$ and $\widetilde G$ as in \eqref{eq:def_G_tilde}, we have
	\begin{equation}\label{eq:GoalEIG_decomp}
		\Phi(V_s| U_r) \geq
		\bbE_{Y}\left[\KL\left(\pi_{X|Y}\|\,\pi_{X}\right) \right]
		- \tfrac{1}{2\sigma^2}\bbE\big[\|G(X)-\widetilde G(X)\|^2\big].
	\end{equation}
\end{lemma}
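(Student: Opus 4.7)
The plan is to derive the exact identity
\[
\bbE_Y\!\left[\KL(\pi_{X|Y}\|\pi_X)\right] = \bbE_Y\!\left[\KL(\pi_{X|Y}\|\widetilde\pi_{X|Y})\right] + \Phi(V_s|U_r),
\]
and then invoke \Cref{Lem:L2vsPosterior} on the middle term to conclude. The inequality \eqref{eq:GoalEIG_decomp} then follows by rearrangement, since the neglected KL term is nonnegative and bounded above by $\tfrac{1}{2\sigma^2}\bbE[\|G(X)-\widetilde G(X)\|^2]$.

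First I would use the standard additive decomposition
\[
\KL(\pi_{X|Y}\|\pi_X) = \KL(\pi_{X|Y}\|\widetilde\pi_{X|Y}) + \bbE_{X|Y}\!\left[\log \tfrac{\widetilde\pi_{X|Y}(X|Y)}{\pi_X(X)}\right].
\]
Next I would compute the normalizing constant of $\widetilde\pi_{X|Y}$ from its definition \eqref{eq:def_pi_tilde}: since $\pi_{Y_s|X_r}(y_s|x_r)$ does not depend on $x_\perp$, marginalizing $\pi_{Y_s|X_r}(y_s|x_r)\pi_X(x)$ first over $x_\perp$ and then over $x_r$ yields the true marginal $\pi_{Y_s}(y_s)$. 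Consequently,
\[
\frac{\widetilde\pi_{X|Y}(x|y)}{\pi_X(x)} = \frac{\pi_{Y_s|X_r}(y_s|x_r)}{\pi_{Y_s}(y_s)} = \frac{\pi_{X_r|Y_s}(x_r|y_s)}{\pi_{X_r}(x_r)},
\]
so that the cross-term depends on $(x,y)$ only through $(x_r,y_s)$.

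Taking expectation over $Y$ and using the tower rule converts $\bbE_Y\bbE_{X|Y}$ into $\bbE_{(X_r,Y_s)}$, which reproduces exactly the goal-oriented EIG $\Phi(V_s|U_r)$ from \eqref{eq:objective_goalEIG}. Substituting back gives the claimed identity, after which \Cref{Lem:L2vsPosterior} finishes the proof. The only slightly delicate step is the normalization calculation — identifying that the partition function of $\widetilde\pi_{X|Y}$ is the \emph{true} marginal $\pi_{Y_s}(y_s)$ is what makes the cross-term collapse cleanly into $\Phi(V_s|U_r)$ rather than some approximate surrogate; once that is recognized, the rest is bookkeeping and an appeal to \Cref{Lem:L2vsPosterior}.
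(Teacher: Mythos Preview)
Your proposal is correct and follows essentially the same route as the paper: both establish the exact identity $\Phi(V_s|U_r)=\bbE_Y[\KL(\pi_{X|Y}\|\pi_X)]-\bbE_Y[\KL(\pi_{X|Y}\|\widetilde\pi_{X|Y})]$ by recognizing that $\widetilde\pi_{X|Y}/\pi_X=\pi_{X_r|Y_s}/\pi_{X_r}$ (equivalently $\widetilde\pi_{X|Y}=\pi_{X_r|Y_s}\pi_{X_\perp|X_r}$), and then apply \Cref{Lem:L2vsPosterior} to the second term. The only difference is presentational: you phrase it as an additive KL decomposition plus a tower-rule marginalization, while the paper writes out the chain of log-ratio substitutions explicitly.
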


Inequality \eqref{eq:KLposterior} shows that minimizing the $L^2$-error of $\widetilde G$ corresponds to minimizing an upper bound of the posterior error of $\widetilde\pi_{X|Y}$.
At the same time, minimizing the $L^2$-error also maximizes a lower bound of the goal-oriented EIG $\Phi(V_s|U_r)$ according to \eqref{eq:GoalEIG_decomp}. For this reason, we will focus on the $L^2$-approximation of $\widetilde G$ in this section. The link to the goal-oriented GSA problem \eqref{eq:objective_goalGSA} will be addressed later in \Cref{prop:GradvsGSA}.
For any fixed pair of matrices $(U_r,V_s)$, the following proposition gives analytical expressions for the optimal function $\widetilde G$ in \eqref{eq:def_G_tilde}.
The proof is given in \Cref{app:ProofOptimalG}.

\begin{proposition}\label{thm:optimalG}
	Let $X$ be a random vector taking values in $\bbR^d$ and let $G\colon\bbR^d\rightarrow\bbR^m$ be such that $\bbE[\|G(X)\|^2]<\infty$.
	For any matrices $U_r\in\bbR^{d\times r}$ and $V_s\in\bbR^{m\times s}$ with orthogonal columns, we define $G^*\colon\bbR^d\rightarrow\bbR^m$ as
	\begin{equation*}
		G^*(x) = V_s  g^*(U_r^\top x) +  G_\perp^* ,
	\end{equation*}
	where $ g^*(x_r) =  V_s^\top \bbE[ G(X) |\, U_r^\top X = x_r ] $ and $ G_\perp^* =  (I_m-V_s V_s^\top)\,\bbE[G(X)]$.
	Then $G^*$ minimizes $\widetilde G\mapsto \bbE[\|G(X) - \widetilde G(X)\|^2]$ over the set of square integrable functions in the form of \eqref{eq:def_G_tilde}.
	Furthermore, it holds that
	\begin{align}
		\bbE\left[\|G(X)- G^*(X)\|^2\right] &= \Tr\left(\Cov\left(G(X)\right)\right) - \Tr\left(V_s^\top \Cov\left(\bbE[G(X)| U_r^\top X ]\right) V_s\right) , \label{eq:error_decomp} 
	\end{align}
	where $\Cov(\bbE[ G(X) |\, U_r^\top X ]) \in\bbR^{m\times m}$ denotes the covariance of $\bbE[ G(X) |\, U_r^\top X ]$.
\end{proposition}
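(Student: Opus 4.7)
\textbf{Proof plan for Proposition \ref{thm:optimalG}.}
The plan is to decouple the approximation into two independent pieces: one in the range of $V_s$ and one in its orthogonal complement. First I would observe that any $\widetilde G(x) = V_s\widetilde g(U_r^\top x) + \widetilde G_\perp$ can, without loss of generality, be normalized so that $V_s^\top \widetilde G_\perp = 0$, because any component of $\widetilde G_\perp$ in $\Imag(V_s)$ may be absorbed into $\widetilde g$ without changing $\widetilde G$. Under this normalization, the vectors $V_s\widetilde g(U_r^\top X)$ and $\widetilde G_\perp$ live in orthogonal subspaces, and orthogonally projecting $G(X)$ with $V_sV_s^\top$ and $I_m - V_sV_s^\top$ gives the Pythagorean decomposition
\begin{equation*}
\bbE\bigl[\|G(X)-\widetilde G(X)\|^2\bigr]
= \bbE\bigl[\|V_s^\top G(X) - \widetilde g(U_r^\top X)\|^2\bigr]
+ \bbE\bigl[\|(I_m - V_sV_s^\top)G(X) - \widetilde G_\perp\|^2\bigr],
\end{equation*}
using $\|V_s a\|=\|a\|$ for the first term since $V_s$ has orthonormal columns.

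Next I would minimize the two summands separately. The second is a standard deterministic mean-square problem whose unique minimizer is $\widetilde G_\perp = (I_m-V_sV_s^\top)\bbE[G(X)] = G_\perp^*$. The first is a conditional-expectation best-approximation problem in $L^2$: among all functions of $U_r^\top X$, the minimizer of $\bbE[\|V_s^\top G(X)-\widetilde g(U_r^\top X)\|^2]$ is $\widetilde g(x_r) = \bbE[V_s^\top G(X)\mid U_r^\top X = x_r] = V_s^\top\bbE[G(X)\mid U_r^\top X = x_r] = g^*(x_r)$, where the last equality uses linearity of conditional expectation. Combining these two gives $G^*$ and establishes optimality.

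For the error formula \eqref{eq:error_decomp} I would evaluate each Pythagorean term at the optimum. The second equals $\Tr((I_m - V_sV_s^\top)\Cov(G(X))(I_m - V_sV_s^\top)) = \Tr(\Cov(G(X))) - \Tr(V_s^\top\Cov(G(X))V_s)$ using the idempotence of the projector and the trace cyclic identity. The first, by the law of total covariance $\Cov(G(X)) = \bbE[\Cov(G(X)\mid U_r^\top X)] + \Cov(\bbE[G(X)\mid U_r^\top X])$, equals $\Tr(V_s^\top\Cov(G(X))V_s) - \Tr(V_s^\top\Cov(\bbE[G(X)\mid U_r^\top X])V_s)$. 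Adding the two sums telescopes the $\Tr(V_s^\top\Cov(G(X))V_s)$ terms and yields the claimed identity. The only real subtlety to check carefully is the WLOG normalization of $\widetilde G_\perp$, since without it the parametrization is redundant and the two pieces are not a priori orthogonal; the remaining arguments are standard orthogonal-projection and law-of-total-covariance manipulations.
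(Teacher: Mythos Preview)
Your argument is correct. Both the optimality and the error identity follow as you describe, and the WLOG normalization $V_s^\top\widetilde G_\perp=0$ is exactly what is needed to make the two summands decouple.

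The paper proceeds by a different (though closely related) decomposition: it writes $G-\widetilde G=(G-G^*)+(G^*-\widetilde G)$ and verifies that the cross term $\bbE[(G-G^*)^\top(G^*-\widetilde G)]$ vanishes, using the orthogonality of $V_s$ and $V_\perp$ together with the defining property $\bbE[G(X)f(U_r^\top X)]=\bbE[\bbE[G(X)\mid U_r^\top X]f(U_r^\top X)]$ of conditional expectation. This is the classical ``residual is orthogonal to the approximation space'' argument, which \emph{verifies} that a given $G^*$ is optimal. Your route instead splits along $\Imag(V_s)$ and $\Imag(V_s)^\perp$ and \emph{derives} $g^*$ and $G_\perp^*$ as the separate minimizers; this is arguably more constructive and makes the roles of the two approximation ingredients more transparent. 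For \eqref{eq:error_decomp}, the paper expands $\|G-G^*\|^2$ directly and again invokes the conditional-expectation property, whereas your use of the law of total covariance gives a cleaner telescoping computation. Both proofs rely on the same WLOG normalization of $\widetilde G_\perp$, which the paper also states explicitly.
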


Proposition \ref{thm:optimalG} allows us to obtain optimal $U_r$ and $V_s$ by minimizing the $L^2$-error of the forward model as given by \eqref{eq:error_decomp}, the minimum value for the objective in~\eqref{eq:objective_L2}. After that, one can approximate the low-dimensional function $g^*$ using state-of-the-art approximation methods (see e.g. \cite{adcock2022sparse,gramacy2020surrogates}) and compute $G^*_\perp$ to obtain a reduced order model of $G$.
We focus on reducing the inputs and/or outputs by minimizing \eqref{eq:error_decomp} over $U_r$ and/or $V_s$, which is equivalent to
\begin{equation}\label{eq:optimal_UrVs_error}
	\max\;\Tr\left(V_s^\top \Cov\left(\bbE[\left.G(X) \right| U_r^\top X ]\right) V_s\right).
\end{equation}
Let us look at some computational aspects of \eqref{eq:optimal_UrVs_error}. Consider the goal-oriented case with fixed $U_r$. The matrix $V_s^*(U_r)$ that maximizes \eqref{eq:optimal_UrVs_error} spans the $s$-dimensional subspace which most effectively captures the output variations caused by parameter of interest $X_r=U_r^\top X$. It can be computed as the matrix containing the $s$ dominant eigenvectors of $\Cov(\bbE[ G(X) |\, U_r^\top X ])$, see \Cref{Lem:TruncatedSVD} bellow. Without input dimension reduction $r=d$, this would correspond to the PCA solution which consists of the dominant eigenvectors of $\Cov(G(X))$ \cite{hotelling1933PCA}. Replacing $G(X)$ by its conditional expectation $\bbE[ G(X) | U_r^\top X ]$ is a natural modification to couple the output dimension reduction with a prescribed input subspace $U_r$. However, computing the covariance of the conditional expectation is challenging. It poses a common problem in global sensitivity analysis and requires dedicated algorithms such as the pick-freeze method \cite{sobol1993PF}.
The alternative goal-oriented problem with fixed $V_s$ gives us a matrix $U_r^*(V_s)$ that extracts the inputs which best explain the projected output $V_r^\top G(X)$. This problem is even more difficult to solve as there exists no closed form expression of $U_r^*(V_s)$ in general. The same computational difficulties affect the problem of finding the optimal coupled matrix pair $(U_r^*,V_s^*)$ that maximizes \eqref{eq:optimal_UrVs_error}. In the following section, we derive a gradient-based bound that provides a feasible optimization procedure to identify the reduced subspaces.

\subsection{Derivative-based Bounds} \label{subsec:GradientBasedBound}

In this subsection, we derive derivative-based bounds for the $L^2$-error \eqref{eq:error_decomp} which permit more manageable optimization problems compared to \eqref{eq:optimal_UrVs_error}.
To derive the bounds, we invoke the Poincar\'{e} inequality \cite{poincare1890equations} and a Cram\'{e}r-Rao-like inequality \cite{rao1945AccuracyAttainable} for the input random variable $X$.

\begin{definition}[Poincar\'{e} and Cram\'{e}r-Rao inequality]\label{def:PI}
	Given a random vector $X$ taking values in $\bbR^d$, we denote the smallest and the largest constants by $\CC(X)\geq0$ and $c(X)\geq0$, respectively, such that
	\begin{equation}\label{eq:PI}
		c(X)\left\| \bbE\left[\nabla f(X)\right] \right\|^2
		\leq 
		\bbE\big[ \left(f(X)-\bbE[f(X)]\right)^2\big] 
		\leq
		\CC(X)\, \bbE\big[\left\| \nabla f(X)\right\|^2\big], 
	\end{equation}
	holds for any continuously differentiable function $f\colon\bbR^d\rightarrow \bbR$. 
	If $ \CC(X)<\infty$, the right inequality is called the \emph{Poincar\'{e} inequality} with constant $ \CC(X)$ and, if $c(X)>0$, the left inequality is called the \emph{Cram\'{e}r-Rao inequality} with constant $ c(X)$.
\end{definition}

It is well known that a Gaussian random vector $X\sim\CN(\mu,\Sigma)$ on $\bbR^d$ satisfies \eqref{eq:PI} with $ c(X)=\lambda_{\min}(\Sigma)$ and $ \CC(X)=\lambda_{\max}(\Sigma)$, see \Cref{lem:CRI} and \cite{ane2000inegalites}. More details regarding the Poincar\'{e} inequality can be found in \cite{bakry2014analysis, zahm2022certified} and references therein. For the Cram\'{e}r-Rao inequality see the discussion in \Cref{app:ProofCRI}. We defer to Proposition \ref{prop:control_constants} for sufficient conditions so that $ c(X)>0$ and $ \CC(X)<\infty$ hold. We now state our main result.
The proof can be found in \Cref{app:ProofErrBound}.

\begin{theorem}\label{thm:errbound}
	Given a random vector $X$ taking values in $\bbR^d$, we let $\overline{\CC}(X)\geq0$ and $\overline{c}(X)\geq0$ be the smallest and the largest constant, respectively, such that
	\begin{align}
		\CC(X_\perp|\, X_r = x_r) &\leq \overline{\CC}(X) ,  \label{eq:subspacePI} \\
		c(X_\perp|\, X_r = x_r) &\geq \overline{c}(X) , \label{eq:subspaceCramer} 
	\end{align}
	holds for all $x_r\in\bbR^r$ and all $U_r\in\bbR^{d\times r}$ with orthonormal columns, where $X_r=U_r^\top X$ and $X_\perp=U_\perp^\top X$ with $U_\perp\in\bbR^{d\times (d-r)}$ being an orthogonal complement to $U_r$. In other words, we require every conditional distribution of $X_\perp$ to satisfy the Poincar\'{e} and Cram\'{e}r-Rao inequality. 

	Let $G\colon\bbR^d\rightarrow\bbR^m$ be a continuously differentiable function such that $\bbE[\|\nabla G(X)\|_F^2] < \infty$, where $\nabla G(X)\in\bbR^{m\times d}$ denotes the Jacobian of $G$ and $\|\cdot\|_F$ is the Frobenius norm.
	Then, for any $U_r\in \bbR^{d\times r}$ and $V_s\in\bbR^{m\times s}$ with orthonormal columns, we have
	\begin{align}
		\bbE\left[\|G(X)- G^*(X)\|^2\right] &\leq \overline{\CC}(X) \Bigl(  \bbE\left[\| \nabla G(X) \|_F^2\right] - \bbE\bigl[\| V_s^\top \nabla G(X) U_r \|_F^2\bigr] \Bigr), \label{eq:error_bound} \\
		\bbE\left[\|G(X)- G^*(X)\|^2\right] &\geq \overline{c}(X) \left( \bigl\|\bbE\left[ \nabla G(X) \right]\bigr\|_F^2 - \left\| V_s^\top \bbE\bigl[\nabla G(X) \bigr] U_r \right\|_F^2 \right) ,\label{eq:error_lowerbound}
	\end{align}
	with $G^*$ defined as in \Cref{thm:optimalG}.
\end{theorem}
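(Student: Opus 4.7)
The plan is to start from the exact $L^2$-error formula in \Cref{thm:optimalG}, establish an additive decomposition of the error into two non-negative pieces that can be bounded separately via the conditional Poincar\'{e} inequality (for the upper bound) and the Cram\'{e}r--Rao inequality (for the lower bound), and then recombine the resulting gradient-based terms using simple Frobenius-norm identities.

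The first step is the orthogonal decomposition
\begin{equation*}
G(X)-G^*(X) = V_sV_s^\top\bigl(G(X)-\bbE[G(X)\,|\,X_r]\bigr) + V_\perp V_\perp^\top\bigl(G(X)-\bbE[G(X)]\bigr),
\end{equation*}
where $V_\perp\in\bbR^{m\times(m-s)}$ is an orthogonal complement of $V_s$. Since the two summands lie in orthogonal subspaces of $\bbR^m$, the Pythagorean theorem and summing over components yield
\begin{equation*}
\bbE\bigl[\|G(X)-G^*(X)\|^2\bigr]
= \sum_{j=1}^{s}\bbE\bigl[\Var(V_{s,j}^\top G(X)\,|\, X_r)\bigr]
+ \sum_{j=1}^{m-s}\Var\bigl(V_{\perp,j}^\top G(X)\bigr).
\end{equation*}
The same decomposition can be derived from \Cref{thm:optimalG} by the law of total covariance applied to $\Cov(G(X))$.

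For the upper bound \eqref{eq:error_bound}, I would apply the conditional Poincar\'{e} inequality to the scalar map $x_\perp\mapsto V_{s,j}^\top G(U_rx_r+U_\perp x_\perp)$ at fixed $x_r$; its $x_\perp$-gradient equals $U_\perp^\top\nabla G(X)^\top V_{s,j}$. Summing over $j$ and integrating over $X_r$ yields $\overline{\CC}(X)\,\bbE[\|V_s^\top\nabla G(X)U_\perp\|_F^2]$ for the first sum. For the second sum, applying the Poincar\'{e} inequality to the marginal distribution of $X$ (the case $r=0$ of \eqref{eq:subspacePI}, a permissible choice of $U_r$) gives $\overline{\CC}(X)\,\bbE[\|V_\perp^\top\nabla G(X)\|_F^2]$. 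The combined upper bound then matches \eqref{eq:error_bound} via the identity
\begin{equation*}
\bbE\bigl[\|V_s^\top\nabla G(X)U_\perp\|_F^2\bigr] + \bbE\bigl[\|V_\perp^\top\nabla G(X)\|_F^2\bigr]
= \bbE\bigl[\|\nabla G(X)\|_F^2\bigr] - \bbE\bigl[\|V_s^\top\nabla G(X)U_r\|_F^2\bigr],
\end{equation*}
obtained by expanding $\|\nabla G(X)\|_F^2$ in the orthonormal bases $(V_s,V_\perp)$ on the left and $(U_r,U_\perp)$ on the right.

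The lower bound \eqref{eq:error_lowerbound} follows the same template with Cram\'{e}r--Rao in place of Poincar\'{e}. The subtlety is that Cram\'{e}r--Rao applied to the conditional piece produces $\bbE[\Var(V_{s,j}^\top G\,|\,X_r)]\ge\overline{c}(X)\,\bbE\bigl[\|\bbE[U_\perp^\top\nabla G(X)^\top V_{s,j}\,|\,X_r]\|^2\bigr]$, an expectation of the squared norm of a \emph{conditional} expectation. I would then invoke Jensen's inequality $\bbE[\|Y\|^2]\ge\|\bbE[Y]\|^2$ to drop the inner conditioning and recover $\|U_\perp^\top\bbE[\nabla G(X)]^\top V_{s,j}\|^2$. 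The unconditional sum is handled directly by marginal Cram\'{e}r--Rao, and the same Frobenius identity delivers \eqref{eq:error_lowerbound}. The main obstacle is precisely this Jensen reduction, which introduces the only unavoidable slackness in the lower bound, together with the minor bookkeeping point that \eqref{eq:subspacePI}--\eqref{eq:subspaceCramer} must be read as also covering the trivial case $r=0$ so that the marginal of $X$ satisfies the inequalities with the same constants $\overline{\CC}(X)$ and $\overline{c}(X)$.
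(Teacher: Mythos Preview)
Your proposal is correct and follows essentially the same route as the paper: the same orthogonal splitting of $G(X)-G^*(X)$ into the $V_s$-block (bounded via the conditional Poincar\'{e}/Cram\'{e}r--Rao inequality on $X_\perp\,|\,X_r$) and the $V_\perp$-block (bounded via the marginal inequality on $X$), followed by the same Frobenius-norm recombination. The paper summarizes the lower bound as ``permuting the expectation and the norm,'' which is precisely your Jensen step; and where you invoke the $r=0$ case of \eqref{eq:subspacePI}, the paper equivalently uses the marginal constant $\CC(X)$ together with $\CC(X)\leq\overline{\CC}(X)$.
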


Instead of solving the computationally expensive optimization problem  \eqref{eq:optimal_UrVs_error}, we can now optimize the upper bound \eqref{eq:error_bound} of the $L^2$-error to find the subspaces, meaning
\begin{equation}\label{eq:optimal_bound}
	\max\;
	\bbE\left[\left\| V_s^\top \nabla G(X) U_r \right\|_F^2\right] .
\end{equation}
We will show in the next subsection that maximizing over \emph{either} $U_r$ or $V_s$ yields closed-form solutions that can be computed as dominant eigenvectors of certain diagnostic matrices.
There is, however, no closed-form solution for the joint maximization over $U_r$ \emph{and} $V_s$.
Nonetheless, since the Stiefel manifold $\text{St}(p,q) = \{A\in\bbR^{p\times q}\;|\;A^\top A = I_q\}$ is compact \cite{absil2009optimization} and the objective function is continuous with respect to $U_r$ and $V_s$, the joint maximization problem has a solution. 
We propose a natural heuristic for its computation in the next subsection.

The following proposition gives sufficient conditions on the density $\pi_X$ which ensures $\overline{c}(X) >0$ and $ \overline{\CC}(X)<\infty$. 

\begin{proposition}\label{prop:control_constants}
	The following holds:
	\begin{enumerate}
		\item Assume that $\supp(\pi_X)$ is convex and that there exists $\rho>0$ so that 
		\begin{equation}\label{eq:PI_assumption}
			-\nabla^2\log\pi_X(x) \succeq \rho\, I_d
		\end{equation}
		holds for any $x\in\bbR^d$. Then $\overline{\CC}(X)\leq1/\rho$.
		\item Assume that $\supp(\pi_X) = \bbR^d$ and that there exists $\eta>0$ so that
		\begin{equation}\label{eq:CRI_assumption}
			-\nabla^2\log\pi_X(x) \preceq\eta\, I_d
		\end{equation}
		holds for any $x\in\bbR^d$. Then $\overline{c}(X)\geq1/\eta$.
	\end{enumerate}
\end{proposition}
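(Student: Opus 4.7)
The plan is to reduce both claims to classical scalar inequalities applied to conditional distributions of $X$, after showing that the Hessian bounds on $-\log\pi_X$ transfer verbatim (in the appropriate dimension) to every conditional law $\pi_{X_\perp|X_r=x_r}$. Since $[U_r,U_\perp]$ is orthogonal, the joint density of $(X_r,X_\perp)$ is $\pi_X(U_r x_r + U_\perp x_\perp)$, so differentiating $\log\pi_{X_\perp|X_r=x_r}$ twice in $x_\perp$ yields
\[
-\nabla_{x_\perp}^2\log\pi_{X_\perp|X_r=x_r}(x_\perp) = U_\perp^\top\bigl(-\nabla^2\log\pi_X(U_r x_r + U_\perp x_\perp)\bigr)U_\perp.
\]
Using $U_\perp^\top U_\perp = I_{d-r}$, assumption \eqref{eq:PI_assumption} gives $\succeq \rho\,I_{d-r}$ and \eqref{eq:CRI_assumption} gives $\preceq \eta\,I_{d-r}$, uniformly in $(U_r, x_r, x_\perp)$. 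Moreover, the conditional support is the preimage of $\supp(\pi_X)$ under an affine map: convex when $\supp(\pi_X)$ is convex, and equal to $\bbR^{d-r}$ when $\supp(\pi_X)=\bbR^d$.

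For part (1), every conditional density is therefore $\rho$-strongly log-concave on a convex domain of $\bbR^{d-r}$. The Brascamp--Lieb inequality (equivalently, the Bakry--Émery criterion in the Euclidean setting) yields a Poincar\'e constant of at most $1/\rho$, so $\CC(X_\perp|X_r=x_r)\leq 1/\rho$ uniformly and thus $\overline{\CC}(X)\leq 1/\rho$.

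For part (2), the full-support hypothesis ensures that every conditional has a smooth positive density on all of $\bbR^{d-r}$, so integration by parts applies without boundary terms. Since the conditional score has mean zero, for any smooth scalar $f$,
\[
\bbE\bigl[\nabla f(X_\perp)\bigm|X_r=x_r\bigr] = \Cov\bigl(f(X_\perp),\,-\nabla\log\pi_{X_\perp|X_r=x_r}(X_\perp)\bigm|X_r=x_r\bigr).
\]
Contracting with a unit vector $u$ and applying Cauchy--Schwarz bounds the left-hand side squared by $\Var(f|X_r=x_r)\cdot u^\top \CI u$, where $\CI$ denotes the Fisher information of the conditional law. A second integration by parts identifies $\CI = \bbE[-\nabla_{x_\perp}^2\log\pi_{X_\perp|X_r=x_r}(X_\perp)\,|\,X_r=x_r]\preceq\eta\,I_{d-r}$ via the transfer step. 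Optimizing over $u$ gives $\|\bbE[\nabla f|X_r=x_r]\|^2\leq\eta\,\Var(f|X_r=x_r)$, i.e.\ $c(X_\perp|X_r=x_r)\geq 1/\eta$ uniformly, so $\overline{c}(X)\geq 1/\eta$.

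The main technical obstacle lies in part (2): justifying the two integrations by parts and the exchange of differentiation and expectation. These are classical under mild tail-decay assumptions that are implicit in the finiteness of the Fisher information together with the Hessian bound $\preceq\eta\,I_d$, but the cleanest write-up should either state a regularity hypothesis on $\pi_X$ explicitly (e.g.\ $\pi_X\in C^2$ with square-integrable score) or cite a standard reference that subsumes these conditions.
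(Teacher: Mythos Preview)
Your proposal is correct and follows essentially the same route as the paper. For part~(1) the paper simply cites \cite{zahm2022certified} (the Bakry--\'Emery argument you invoke), and for part~(2) the paper's Lemma~\ref{lem:control_subspaceCRI} carries out precisely your integration-by-parts/Cauchy--Schwarz argument, identifying the conditional Fisher information as $\CI_\perp(x_r)=-U_\perp^\top\bbE[\nabla^2\log\pi_X\mid X_r=x_r]U_\perp\preceq\eta\,I_{d-r}$ via the Hessian transfer you describe; the paper additionally averages over $x_r$ and applies Jensen to obtain a slightly different final form, but the pointwise bound you state already yields $\overline{c}(X)\geq 1/\eta$ directly.
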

\begin{proof}
	For the proof of 1 see \cite{zahm2022certified}, and for 2 see \Cref{lem:control_subspaceCRI}. 
\end{proof}

While assumption \eqref{eq:PI_assumption} is a classical strict log-concavity assumption on $\pi_X$, the assumption \eqref{eq:CRI_assumption} means that the density $\pi_X$ has to lie above some scaled Gaussian density (see \Cref{rem:interpret_CRI_assumption}). 
Assuming access to $\overline c(X)$, the lower-bound \eqref{eq:error_lowerbound} can be evaluated with no additional cost after computing $U_r$ and/or $V_s$ and allows us to certify the error made by optimizing the upper bound. 
For Gaussian priors, $\overline c(X)$ can be shown to be the smallest eigenvalue of the prior covariance matrix. For non-Gaussian priors, one may normalize the prior and consider an inference problem in a transformed coordinate system in order to obtain the values for both constants $\overline\CC(X)$ and $\overline c(X)$ \cite{cui2022prior}.

\begin{remark}[Preconditioning]\label{rem:GaussianPrior}
	Using Proposition \ref{prop:control_constants}, one can show for $X\sim\mathcal{N}(\mu,\Sigma)$ that
	$\overline{\CC}(X) = \lambda_{\max}(\Sigma)$ and $\overline{c}(X) =  \lambda_{\min}(\Sigma)$. Using a preconditioned vector $\overline X =  \Sigma^{-1/2}(X-\mu) \sim\CN(0,I_d)$, we get sharper bounds for~\eqref{eq:error_bound} and~\eqref{eq:error_lowerbound} with $\overline{\CC}(\overline X)=\overline{c}(\overline X) = 1$. 
	Appling \Cref{thm:errbound} on $\overline X\mapsto G(\Sigma^{1/2}\overline X + \mu)$ yields
	\begin{equation*}
		\bbE\left[\|G(X)- G^*(X)\|^2\right]
		\leq \bbE\big[\| \nabla G(X)\Sigma^{1/2} \|_F^2\big] - \bbE\bigl[\| \overline V_s^\top \nabla G(X)\Sigma^{1/2} \overline U_r \|_F^2\bigr],
	\end{equation*}
	where we use $G^*(X) = V_s^\top \bbE[ G(X) | X_r ]+  (I_m-V_s V_s^\top)\,\bbE[G(X)] $ with reduced parameter $ X_r = (  \overline U_r^\top\Sigma^{-1/2} ) (X-\mu) $. 
	In practice, even for non-Gaussian $X$, it can be beneficial to apply the change of variable $\overline X= \Cov(X)^{-1/2}(X-\bbE[X])$ and compute the dimension-reduction with the isotropic random variable $\overline X$ instead of $X$.
\end{remark}

\begin{remark}[Equality and Closed-form Solution for Affine Models]\label{rem:equality_affine_model}
	For affine models of the form $G(x)=a + Mx$ with $a\in\bbR^{m}, M\in\bbR^{m\times d}$, the Jacobian $\nabla G(X) = M$ is constant.  If we further assume $X\sim\CN(0,I_d)$, so that  $\overline{\CC}(X) = \overline{c}(X) =1 $, the upper and lower bound coincide with the $L^2$-error
	\begin{equation*}
		\bbE\left[\|G(X)- G^*(X)\|^2\right]
		= \left\| M \right\|_F^2 - \| V_s^\top M U_r \|_F^2.
	\end{equation*}
	Thus, the solution to \eqref{eq:optimal_bound} is actually the same as the one of \eqref{eq:optimal_UrVs_error}.
	By the Eckart-Young-Mirsky theorem, the optimal $U_r$ and $V_s$ minimizing the right hand side have columns consisting of the dominant right and left eigenvectors of $M$, respectively, meaning that $U_r^* = [u_1 \ldots u_r]$ and $V_s^* = [v_1 \ldots v_s]$ where
	\begin{equation*}
		M = \sum_{i=1}^{\min(m,d)} \lambda_i v_i u_i^\top
	\end{equation*}
	is the singular value decomposition of $M$. 
\end{remark}

\subsection{Minimizing the Error Bound}\label{subsec:AlternEIG}

We define the diagnostic matrices
\begin{align}
	H_X(V_s) &= \bbE\left[ \nabla G(X)^\top V_s  V_s^\top \nabla G(X) \right] \in\bbR^{d\times d}, \label{eq:HX}\\
	H_Y(U_r) &= \bbE\left[ \nabla G(X) U_r  U_r^\top \nabla G(X)^\top \right]\in\bbR^{m\times m}.\label{eq:HY}
\end{align}
Then the objective function in \eqref{eq:optimal_bound} can be written as
\begin{equation}\label{eq:errbound_as_trace}
	\bbE\left[\| V_s^\top \nabla G(X) U_r \|_F^2\right]
	= \Tr\left(V_s^\top H_Y(U_r)V_s\right) = \Tr\left(U_r^\top H_X(V_s)U_r\right).
\end{equation}
To find the optimal subspaces that minimize the objective above, we recall the following property on Hermitian matrices (see e.g. Corollary 4.3.39 \cite{horn2012matrix}).
\begin{lemma}[Variational characterization of eigenvalues of Hermitian matrices]\label{Lem:TruncatedSVD}
	Let $H\in\bbR^{d\times d}$ be a symmetric positive definite matrix with eigenpairs $(\lambda_i,w_i)$ meaning $Hw_i = \lambda_i w_i$, where $\lambda_{i+1} \leq \lambda_{i}$ and $\|w_i\|_2=1$ for $i=1,\dots,d$. Then for any $r<d$ we have
	\begin{equation*}
		\max_{\substack{W_r\in\bbR^{d\times r}  \\ W_r^\top W_r=I_r}} \Tr\left(W_r^\top H W_r\right) = \sum_{i=1}^r \lambda_i,
	\end{equation*}
	where a solution is given by $W_r = [w_1,\ldots,w_r]$.
\end{lemma}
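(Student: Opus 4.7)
The plan is to reduce the optimization over orthonormal $W_r$ to a simple linear program over a probability-vector-like object by diagonalizing $H$ and changing variables.

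First I would write the spectral decomposition $H = W\Lambda W^\top$, where $W = [w_1,\ldots,w_d]$ is orthogonal and $\Lambda = \diag(\lambda_1,\ldots,\lambda_d)$. Setting $Q = W^\top W_r \in \bbR^{d\times r}$, a short computation shows $Q^\top Q = W_r^\top W W^\top W_r = W_r^\top W_r = I_r$, so $Q$ also has orthonormal columns, and the map $W_r \mapsto Q$ is a bijection on the Stiefel manifold $\mathrm{St}(d,r)$. Writing $q_i^\top$ for the $i$-th row of $Q$, I would rewrite the objective as
\begin{equation*}
\Tr(W_r^\top H W_r) = \Tr(Q^\top \Lambda Q) = \sum_{i=1}^d \lambda_i \|q_i\|_2^2.
\end{equation*}

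Next I would set $c_i = \|q_i\|_2^2$ and establish the two key constraints: $\sum_{i=1}^d c_i = \Tr(Q^\top Q) = \Tr(I_r) = r$, and $0 \le c_i \le 1$. The upper bound $c_i \le 1$ follows because $Q$ can be completed to an orthogonal $d\times d$ matrix $\widetilde Q$ whose $i$-th row has unit norm, and the norm of a subset of entries of a unit vector is at most one. The problem therefore reduces to the linear program
\begin{equation*}
\max\, \sum_{i=1}^d \lambda_i c_i \quad \text{subject to}\quad 0 \le c_i \le 1,\ \sum_{i=1}^d c_i = r.
\end{equation*}
Since the $\lambda_i$ are sorted in non-increasing order, the maximum is achieved at $c_1 = \cdots = c_r = 1$, $c_{r+1} = \cdots = c_d = 0$, giving value $\sum_{i=1}^r \lambda_i$. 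This is a standard rearrangement argument: any admissible $c$ with $c_j > 0$ for some $j>r$ and $c_k < 1$ for some $k\le r$ can be improved by shifting mass from index $j$ to index $k$.

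Finally I would check that this bound is attained by $W_r = [w_1,\ldots,w_r]$: then $W_r^\top H W_r = \diag(\lambda_1,\ldots,\lambda_r)$, so its trace equals $\sum_{i=1}^r \lambda_i$, matching the upper bound. The main obstacle, if any, is the row-norm inequality $\|q_i\|_2 \le 1$; but since it is just the observation that rows of a matrix with orthonormal columns are truncations of rows of an orthogonal matrix, the proof is essentially just bookkeeping. No assumption beyond symmetry is actually needed (positive definiteness is not used in the argument), but the statement is phrased in the positive-definite case which is the one relevant to \eqref{eq:errbound_as_trace}.
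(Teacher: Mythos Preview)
Your argument is correct and self-contained. The reduction to the linear program via $c_i=\|q_i\|_2^2$ with constraints $0\le c_i\le 1$ and $\sum_i c_i=r$ is the standard route to Ky Fan's maximum principle, and your justification of the row-norm bound via completion to an orthogonal matrix is sound. You are also right that positive definiteness plays no role; symmetry suffices.

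For comparison: the paper does not prove this lemma. It simply cites Corollary~4.3.39 of Horn and Johnson, \emph{Matrix Analysis}, as a known result. So there is no in-paper proof to match against; your proposal supplies a complete elementary argument where the paper defers to a reference.
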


Applying \Cref{Lem:TruncatedSVD} to the second term $\Tr\left(V_s^\top H_Y(U_r)V_s\right)$ in \eqref{eq:errbound_as_trace}, we obtain the solution $V_s^*(U_r)$ to the goal-oriented problem \eqref{eq:optimal_bound} with fixed $U_r$. The solution requires computing the eigendecomposition $H_Y(U_r) = \sum_{i=1}^m \lambda_i^Y(U_r) v_i v_i^\top$ and assembling
\begin{equation}\label{eq:Vs}
	V_s^*(U_r) = [v_1,\hdots,v_s] .
\end{equation}
In a similar fashion, the third term in \eqref{eq:errbound_as_trace} suggests that the solution $U_r^*(V_s)$ to the goal-oriented problem \eqref{eq:optimal_bound} with fixed $V_s$ can be obtained by computing the eigendecomposition $H_X(V_s) = \sum_{i=1}^d \lambda_i^X(V_s) u_i u_i^\top$ and assembling
\begin{equation}\label{eq:Ur}
	U_r^*(V_s) = [u_1,\hdots,u_r].
\end{equation}
By \eqref{eq:error_bound}, we also get the following error bounds for the two goal-oriented problems
\begin{align*}
	\bbE\left[\|G(X)- G^*(X)\|^2\right]
	&\leq \overline{\CC}(X) 
	\Big( \bbE\left[\|\nabla G(X)\|_F^2\right] - \sum_{i=1}^s \lambda_i^Y(U_r) \Big),\\
	\bbE\left[\|G(X)- G^*(X)\|^2\right]
	&\leq \overline{\CC}(X) 
	\Big( \bbE\left[\|\nabla G(X)\|_F^2\right] - \sum_{i=1}^r \lambda_i^X(V_s) \Big).
\end{align*}
For the joint minimization over $(U_r,V_s)$ for the coupled problem \eqref{eq:optimal_bound}, there is no closed form solution in general. To approximate the solution, we propose the iterative \emph{alternating eigendecomposition} scheme
\begin{equation}
	\begin{split}\label{eq:AlternEIG}
		U_r^{k+1} &= U_r^*(V_s^{k}), \\
		V_s^{k+1} &= V_s^*(U_r^{k+1}),
	\end{split}
\end{equation}
where $U_r^*(\cdot)$ and $V_s^*(\cdot)$ are defined as in \eqref{eq:Ur} and \eqref{eq:Vs}.
If $\{(U_r^{k},V_s^{k})\}_{k\geq1}$ converges towards a limit point $(U_r^*,V_s^*)$ then, by construction, this point will be a stationary point of the objective function. 
In practice, the diagnostic matrices can be computed using Monte Carlo estimators and the operational algorithm is formalized in \Cref{alg:AlternEIG}.

\begin{algorithm}
	\caption{Alternating eigendecomposition for coupled input and output dimension reduction.\label{alg:AlternEIG}}
	\begin{flushleft}
	\hspace*{\algorithmicindent} \textbf{Input:} $s,\;r\in\bbN$; $V_s^{0}\in\bbR^{m\times s}$; $\VG_i = \nabla G(x^{(i)})$ where $x^{(i)}\overset{i.i.d.}{\sim}\pi_X$
	\end{flushleft}
\begin{algorithmic}[1]
	\STATE{Set $k = 0$}
	\WHILE{$k\leq \text{maximal iteration}$}
	\STATE{Assemble $\widehat H_X(V_s^{k}) = \frac{1}{M}\sum_{i=1}^M \VG_i^\top V_s^k  {V_s^k}^\top \VG_i$}
	\STATE{Compute SVD of $\widehat H_X(V_s^{k}) = U \Lambda U^\top $}
	\STATE{Set $U_r^{k} = [u_1,\hdots,u_r]$}
	\STATE{Assemble $\widehat H_Y(U_r^{k}) = \frac{1}{M}\sum_{i=1}^M \VG_i U_r^{k}  {U_r^{k}}^\top \VG_i^\top$}
	\STATE{Compute SVD of $\widehat H_Y(U_r^{k}) = V \Lambda V^\top $}
	\STATE{Set $V_s^{k+1} = [v_1,\hdots,v_s]$}
	\STATE{$k = k + 1$}
	\ENDWHILE
	\RETURN $U_r$, $V_s$
\end{algorithmic}
\end{algorithm}

\begin{remark}[Implicit diagnostic matrices]\label{rem:matvec}
	\Cref{alg:AlternEIG} requires computing eigencompositions of the matrices $\widehat H_X(V_s)$ and $\widehat H_Y(U_r)$ which, when $d\gg1$ or $m\gg1$, can be numerically costly to assemble and to store. To remedy this, one can compute the eigencompositions using iterative algorithms (e.g., Krylov methods) which only require access to matrix-vector product operations of the form
	\begin{align*}
		v&\mapsto \widehat H_Y(U_r)v = \frac{1}{M}\sum_{i=1}^M \nabla G(X^{(i)}) \left( U_r  U_r^\top   \left(\nabla G(X^{(i)})^\top v \right) \right) ,\\
		u&\mapsto \widehat H_X(V_s)u = \frac{1}{M}\sum_{i=1}^M \nabla G(X^{(i)})^\top \left( V_s  V_s^\top \left(\nabla G(X^{(i)}) u\right)\right) .
	\end{align*}
	Computing these matrix-vector products requires only evaluating the linear-tangent operator $u\mapsto \nabla G(X^{(i)}) u$ and its adjoint $v\mapsto\nabla G(X^{(i)})^\top v$,
	which have a computational complexity of roughly two forward simulations \cite{plessix2006review,griewank2008evaluating}.
	Since the gradient-based dimension reduction methods are intended as a preprocessing step for the analysis of $G$ (e.g., optimization, calibration or parameter estimation), access to gradients is often required by the prospective analysis itself. Finally, if closed-form gradients are not available, automatic differentiation or gradient approximation methods, such as finite differences, may be used. 
	Both the accuracy of the gradient evaluations and the Monte Carlo estimation of the diagnostic matrices will affect the obtained subspaces $U_r$ and $V_s$. A detailed error analysis is left for future work. 
\end{remark}

\begin{remark}\label{rem:jointDR}
In a recent paper \cite{baptista2022JointDR}, the authors proposed a joint dimension reduction method for the parameter and data space based on similar diagnostic matrices. Using an information theoretic approach, they derive an upper bound for the posterior error $\bbE_Y[\KL(\pi_{X|Y}\|\widetilde\pi_{X|Y})]$ which is minimized by $U_r^*$ and $V_s^*$ consisting of the dominant eigenvectors of
\begin{align*}
	H_X &= \bbE\left[\nabla G(X)^\top \nabla G(X) \right], \label{eq:def_HX}\\
	H_Y &= \bbE\left[\nabla G(X) \nabla G(X)^\top \right],
\end{align*}
respectively.
Their results correspond to our goal-oriented solutions with $V_s^* = V_s^*(I_d)$ and $U_r^* = U_r^*(I_m)$.
Contrary to our error bound, their results depend on the logarithmic Sobolev constant of the joint distribution $\pi_{XY}$ whose existence is generally difficult to establish.
\end{remark}

\section{Application to Goal-oriented Coordinate Selection}\label{sec:Goal}

\subsection{Goal-Oriented Bayesian Optimal Experimental Design}\label{subsec:GoalBOED}

Consider the problem of finding an optimal subset of $s \leq m$ sensors out of $m$ candidate locations that maximizes the information gained about the parameter $X$ that is to be inferred.
The vector $Y=G(X)+\eta$ gathers all possible measurements at the $m$ candidate locations.
Data collected at select sensor locations $\tau\subseteq \{1,\ldots,m\}$, $|\tau|=s$ are denoted by $Y_\tau=V_\tau^\top Y$, where $V_\tau = [e_\tau^m]\in \bbR^{m\times s}$ contains the canonical basis vectors $e_{i}^m\in\bbR^m$ indexed by $\tau$.
Later, we will propose an alternative approach for experimental design where the reduced data is characterized by a matrix $V_s\in\bbR^{m\times s}$ with orthonormal columns. We will explain the benefits of considering this relaxed design space later and how to associate sensor locations to the design matrix $V_s$. 
For any $V_\star$, which can be a coordinate selection design $V_\star=V_\tau$ or a relaxed orthogonal design $V_\star=V_s$, we have the posterior density of $X$ knowing $Y_\star = V_\star^\top Y$ given as
\begin{equation}
	\pi_{X|V_\star^\top Y}(x|y_\star) 
	\propto \exp\left(-\tfrac{1}{2\sigma^2} \left\|y_\star-V_\star^\top G(x)\right\|^2\right) \pi_X(x).
	\label{eq:Likelihood_subspace}
\end{equation}
To quantify the information content of an experimental design $V_\star$, we use the expected information gain (EIG) defined as the KL-divergence between posterior and prior distribution in expectation over the data
\begin{equation}\label{eq:def_EIG}
	\Phi(V_\star) = \bbE_{Y}\left[ \KL\left(\pi_{X|V_\star^\top Y}(\,\cdot\,|V_\star^\top Y)\|\,\pi_{X}\right)\right].
\end{equation}
The divergence reflects the extent to which the posterior distribution differs from the prior due to observed data and, thereby, represents the information gain of the design $V_\star$. 
The EIG is equivalent to the mutual information between $X$ and $Y$ given as $I(X,Y) = \KL(\pi_{X,Y}\|\pi_X \pi_Y)$, i.e., the KL-divergence from the product of marginal densities to their joint density. This perspective can be useful for models with inaccessible likelihood functions~\cite{kraskov2004kNN_MI}. 
For a linear Gaussian inference problem, optimizing the EIG corresponds to minimizing the determinant of the posterior covariance matrix, also referred to as the D-optimality criterion. It can be interpreted as minimizing the volume of the uncertainty ellipsoid around the maximum a-posteriori estimator of the parameter $X$ \cite{alexanderian2021Review}.

Estimating the EIG poses an even greater challenge than solving the Bayesian inference problem as it requires solving the inverse problem itself as a sub-problem. 
The EIG consists of an expectation over both the posterior and data distributions, necessitating an expensive nested Monte Carlo estimator \cite{ryan2003NMC_EIG,beck2018DLIS_EIG}. 
Efficient EIG estimation is an active research area, and various approaches exist. Some rely on constructing tractable approximations to the posterior distribution using measure transport \cite{scheichl2024TransportOED,baptista2024MeasureTransportOED,li2024expected} or variational inference \cite{foster2019variationalEIG}. Others employ different types of surrogate models \cite{huan2013simulation,chen2023derivativeNN,aretz2023greedyOED}. 
A widely used strategy is to resort to Gaussian linear models for which a closed-form expression of the EIG exists \cite{alexanderian2016fast,chen2021GoalEIG,alexanderian2018GoalOED}. 
In addition to the challenging EIG estimation, identifying the optimal set of sensor placements requires solving a combinatorial optimization problem with $\binom{m}{s}$ possible combinations. This NP-hard problem is intractable even for moderate dimensions. Hence, sub-optimal solutions are commonly identified using greedy methods \cite{krause2008GreedySensorsGP,cohen2018GreedySensorsPBDW,chen2021GoalEIG} or by solving continuous relaxations of the problem with sparsity constraints~\cite{haber2008IllposedOED,alexanderian2018GoalOED}.

We consider, in particular, goal-oriented BOEDs, where the final target is to infer the parameter in a lower-dimensional linear subspace of interest. This quantity of interest $X_r=U_r^\top X$ is characterized by a prescribed linear 'goal' operator $U_r\in\bbR^{d\times r}$.
For example, if the PDE model describes a spatio-temporal field with $X$ as the initial state, the initial state in a localized region of the domain could be a parameter of interest.
Tailoring the experimental design to maximize the information gained with respect to the parameter of interest allows for more efficient and accurate inference from experimental data. 
For a given parameter of interest $X_r$, we seek the optimal design $V_\star$ that maximizes the goal-oriented EIG (see \eqref{eq:objective_goalEIG})
\begin{equation}\label{eq:goalEIG}
	\max_{V_\star}\; \Phi(V_\star | U_r) = \bbE_{Y}\left[ \KL\left(\pi_{X_r|V_\star^\top Y}(\,\cdot\,|V_\star^\top Y)\|\,\pi_{X_r}\right)\right] , 
\end{equation}
where $\pi_{X_r|V_\star^\top Y}$ and $\pi_{X_r}$ are the marginalized densities. 
The goal-oriented BOED problem faces the same obstacles as the non-goal-oriented one, namely the expensive EIG estimation and combinatorial optimization problem. 
We avoid both challenges by maximizing a lower bound for the EIG. Although our method requires the assumption of a Gaussian likelihood, it allows for non-linear forward models.

Combining \Cref{Lem:L2vsEIG} with \Cref{thm:errbound} permits us to optimize a lower bound of the goal-oriented EIG by maximizing $\Tr\left(V_\star^\top H_Y(U_r) V_\star\right)$. 
In the coordinate selection case with $V_\star=V_\tau$, this reduces to solving the problem
\begin{equation}\label{eq:GoalOED_tau}
	\max_{\substack{V_\tau = [e_\tau]\in\bbR^{m\times s}\\ \tau\subset\{1,\ldots,m\},\; |\tau|=s}}\; \Tr\left(V_\tau^\top H_Y(U_r) V_\tau\right) = \sum_{i\in\tau^*} H_Y(U_r)_{ii} ,
´\end{equation}
where the solution $\tau^*\subseteq\{1,\ldots,m\}$ is the index set of the $s$ largest diagonal entries of $H_Y(U_r)$.
While our solution $\tau^*$ may be sub-optimal due to the maximization of a lower bound, it has the advantage of being computable without the expensive evaluation of the EIG nor the need to solve a combinatorial optimization problem. 

For orthogonal designs $V_\star=V_s$, we obtain with \Cref{Lem:TruncatedSVD} the solution
\begin{equation}\label{eq:GoalOED_subspace}
	\max_{\substack{V_s\in\bbR^{m\times s}\\V_s^\top V_s = I_m}}\; \Tr\left(V_s^\top H_Y(U_r) V_s\right) = \sum_{i=1}^s \lambda_i^Y(U_r) ,
\end{equation}
with $V_s^*(U_r) = [v_1 , \dots, v_s]$ and $(\lambda_i^Y,v_i)$ being the $i$-th largest eigenpair of $H_Y(U_r) = \sum_{i=1}^m \lambda_i^Y(U_r) v_iv_i^\top$ with $\lambda_{i+1}^Y(U_r) \leq \lambda_i^Y(U_r)$.
Given that~\eqref{eq:GoalOED_subspace} is a relaxation of \eqref{eq:GoalOED_tau}, we expect $V_s^*(U_r)$ to yield better results compared to $V_{\tau^*}$, meaning $ \sum_{i=1}^s \lambda_i^Y(U_r) \geq \sum_{i\in\tau^*} H_Y(U_r)_{ii}$. To associate sensor locations with the optimal subspace $V_s^*(U_r)$, we propose to employ the empirical interpolation method (EIM) \cite{maday2004EIM,maday2007MagicPoints}.

The EIM takes as input a set of basis vectors  $v_1,\ldots,v_s\in\bbR^m$ and returns a set of indices $\{i_1,\ldots,i_s\}\subseteq\{1,\ldots,m\}$ corresponding to the empirically-best interpolation points for the given basis. Combining the EIM with dimension reduction methods for sensor placement has been considered, for example, in \cite{giraldi2018optimal,mula2018sensorEIM}.
By using the EIM on the columns of $V_s^*(U_r)$, we can thus associate sensor locations $\widetilde\tau = \{i_1,\ldots,i_s\}$ to the orthogonal design matrix $V_s^*(U_r)$. 
Compared to $\tau^*$, the EIM-based sensors  $\widetilde\tau$ are less optimal with respect to the goal-oriented EIG lower bound \eqref{eq:GoalOED_tau}. 
However, our numerical experiments show that the EIM-based sensors represent a viable alternative with similarly good performance as $\tau^*$ in terms of the \emph{true} goal-oriented EIG rather than the lower bound. In contrast to $\tau^*$, the EIM-based sensors $\widetilde\tau$ tend to be less localized in our experiments.
In the numerical experiments, we also apply the EIM on PCA bases to obtain sensor placements $\widetilde\tau^{\PCA}$, which we will refer to as PCA-EIM sensors.

\subsection{Goal-Oriented Global Sensitivity Analysis}\label{subsec:GoalGSA}

Global sensitivity analysis aims to assign an importance value to each set of input variables, reflecting its contribution to the output variations. It is often used for factor prioritization and/or fixing in complex models. Factor prioritization identifies the subset $\tau\subseteq\{1,\ldots,d\}$, $|\tau|=r$ of input parameters causing the most output variation so that the parameters $X_\tau=(X_i)_{i\in\tau}$ can be prioritized e.g., in uncertainty reduction. Factor fixing, on the other hand, identifies the input set $\tau$ with the least influence on the output. Thus, $X_\tau$ can be fixed at a nominal value to reduce the complexity of, e.g., parameter estimation problems.
For a scalar-valued square-integrable function $f\colon\bbR^d\rightarrow\bbR$ with independent inputs $X_1,\ldots,X_d$, commonly used sensitivity indices are the closed Sobol' indices and the total Sobol' indices defined as 
\begin{equation}\label{eq:def_Sobol_scalar}
	S^{\cl}_\tau = \frac{\Var\left(\bbE\left[\left.f(X)\right|X_\tau\right]\right)}{\Var\left(f(X)\right)}
	\qquad \text{and} \qquad 
	S^{\tot}_\tau = 1-\frac{\Var\left(\bbE\left[\left.f(X)\right|X_{-\tau}\right]\right)}{\Var\left(f(X)\right)}, 
\end{equation}
respectively, where $X_{-\tau}=(X_i)_{i\notin\tau}$. 
The closed Sobol' index quantifies the portion of output variance attributed to $X_\tau$. A higher value implies a stronger influence of $X_\tau$ on the output. The total Sobol' index encompasses output variation due to $X_\tau$ along with variations caused by interactions of $X_\tau$ with any of the remaining input variables. As a result, it holds that $S^{\cl}_\tau \leq S^{\tot}_\tau$ and $1 = S^{\tot}_\tau + S^{\cl}_{-\tau}$. A lower value in total Sobol' index suggests that the parameters indexed by $\tau$ have little direct nor indirect impact on the output.

We consider, in particular, goal-oriented GSA where some outputs of interest $V_s^\top G(X)$ are prescribed. 
We use definitions of Sobol' indices for vector-valued functions in \cite{gamboa2014multidim} and
generalize them to the goal-oriented case as follows
\begin{equation}
	\begin{split}
		S^{\cl}(U_\tau|V_s) 
		= \frac{\Tr\left( \Cov\left( \bbE\left[\left.V_s^\top G(X)\right| U_\tau^\top X\right] \right)\right) }{\Tr\left( \Cov\left(V_s^\top G(X) \right) \right) }, \\
		S^{\tot}(U_\tau|V_s) 
		= 1-\frac{\Tr\left(\Cov\left(\left. V_s^\top G(X)\right| U_{-\tau}^\top X \right)\right)}{\Tr\left(\Cov\left( V_s^\top G(X) \right)\right)} .
	\end{split}
\end{equation}
For $V_s\in\bbR^{m\times 1}$ being a vector, the provided indices correspond to the ones for a scalar-valued output \eqref{eq:def_Sobol_scalar}.

Using the Poincar\'{e} and Cram\'{e}r-Rao inequality, we now derive derivative-based bounds for the goal-oriented Sobol' indices involving similar derivative-based quantities as in \Cref{thm:errbound}.

\begin{proposition}\label{prop:GradvsGSA}
	With the above notations, for any $U_\tau = [e_\tau^d]\in\bbR^{d\times r}$ and $V_s\in\bbR^{m\times s}$ with orthogonal columns, we have
	\begin{align}
		1- \overline{\CC}(X) \frac{ \Tr\left( U_{-\tau}^\top H_X(V_s) U_{-\tau}\right)}{\Tr\left(\Cov\left(V_s^\top G(X)\right)\right)}
		&\leq  S^{\cl}(U_{\tau}|V_s) \leq
		1- \overline{c}(X) \frac{ \left\|V_s^\top\bbE\left[ \nabla G(X) \right] U_{-\tau} \right\|_F^2  }{\Tr\left(\Cov\left(V_s^\top G(X)\right)\right)} , \label{eq:bound_closedSobol}\\
		\overline{c}(X)  \frac{  \left\|V_s^\top\bbE\left[ \nabla G(X) \right] U_{\tau} \right\|_F^2 }{\Tr\left(\Cov\left(V_s^\top G(X)\right)\right)}  
		&\leq S^{\tot}(U_{\tau}|V_s) \leq \overline{\CC}(X) \frac{  \Tr\left( U_{\tau}^\top H_X(V_s) U_{\tau}\right) }{\Tr\left(\Cov\left(V_s^\top G(X)\right)\right)},  \label{eq:bound_totalSobol}
	\end{align}
	where $U_{-\tau} = [e_{-\tau}^d]\in\bbR^{d\times (d-r)}$ is the orthogonal complement to $U_\tau$.
\end{proposition}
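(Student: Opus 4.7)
The plan is to express both goal-oriented Sobol' indices as \emph{expected conditional variances} via the law of total variance, apply the Poincar\'e and Cram\'er-Rao inequalities of \Cref{def:PI} componentwise to the relevant conditional distribution, and then aggregate over the $s$ rows of $V_s^\top G$ in Frobenius norm. Set $X_\tau = U_\tau^\top X$ and $X_{-\tau} = U_{-\tau}^\top X$; because $U_\tau,U_{-\tau}$ are canonical-basis blocks, $(X_\tau,X_{-\tau})$ is an orthogonal splitting of $X$.

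Applying the law of total variance componentwise to $f_j(X):=e_j^\top V_s^\top G(X)$ and taking the trace gives the two identities
\begin{align*}
\Tr\bigl(\Cov(V_s^\top G(X))\bigr)\,\bigl(1 - S^{\cl}(U_\tau|V_s)\bigr) &= \Tr\bigl(\bbE[\Cov(V_s^\top G(X)\,|\,X_\tau)]\bigr), \\
\Tr\bigl(\Cov(V_s^\top G(X))\bigr)\,S^{\tot}(U_\tau|V_s) &= \Tr\bigl(\bbE[\Cov(V_s^\top G(X)\,|\,X_{-\tau})]\bigr).
\end{align*}
Here I interpret the paper's notation $\Cov(\,\cdot\,|X_*)$ through the standard total-variance decomposition so that both indices match their scalar counterparts. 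The common quantity to bound is therefore $\Tr\bigl(\bbE[\Cov(V_s^\top G(X)\,|\,X_*)]\bigr)$, and the ``free'' direction of variation conditional on $X_*$ is $U_\#$, where $(X_*, U_\#) = (X_\tau, U_{-\tau})$ for $S^{\cl}$ and $(X_*, U_\#) = (X_{-\tau}, U_\tau)$ for $S^{\tot}$.

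For each $f_j$, I would apply Poincar\'e and Cram\'er-Rao to the conditional distribution of the free coordinates given $X_* = x_*$. Since only the free direction varies, the relevant gradient is $\nabla_{x_\#} f_j(X) = U_\#^\top \nabla G(X)^\top V_s e_j$, and assumptions \eqref{eq:subspacePI}--\eqref{eq:subspaceCramer} guarantee that the conditional constants are dominated by $\overline\CC(X)$ and $\overline c(X)$, uniformly in $x_*$. Taking expectation over $X_*$, summing over $j=1,\ldots,s$, and using $\sum_j\|U_\#^\top \nabla G(X)^\top V_s e_j\|^2 = \|V_s^\top \nabla G(X) U_\#\|_F^2$, the Poincar\'e direction yields
\begin{equation*}
\Tr\bigl(\bbE[\Cov(V_s^\top G(X)\,|\,X_*)]\bigr) \leq \overline\CC(X)\,\bbE\bigl[\|V_s^\top \nabla G(X) U_\#\|_F^2\bigr] = \overline\CC(X)\,\Tr\bigl(U_\#^\top H_X(V_s)\, U_\#\bigr).
\end{equation*}
The Cram\'er-Rao direction combined with the Jensen step $\|\bbE[\,\cdot\,]\|^2 = \|\bbE_{X_*}[\bbE[\,\cdot\,|X_*]]\|^2 \leq \bbE_{X_*}[\|\bbE[\,\cdot\,|X_*]\|^2]$, which converts the conditional mean gradient into the unconditional one, yields
\begin{equation*}
\Tr\bigl(\bbE[\Cov(V_s^\top G(X)\,|\,X_*)]\bigr) \geq \overline c(X)\,\bigl\|V_s^\top \bbE[\nabla G(X)]\, U_\#\bigr\|_F^2.
\end{equation*}

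Substituting these two-sided bounds back into the identities above — with $U_\# = U_{-\tau}$ for $S^{\cl}$ and $U_\# = U_\tau$ for $S^{\tot}$ — delivers the four inequalities \eqref{eq:bound_closedSobol}--\eqref{eq:bound_totalSobol}. The main obstacle is bookkeeping rather than a deep step: one must correctly pair each Sobol' index with the right conditioning variable and its orthogonal complement, and confirm that the conditional distribution of the free coordinates given $X_*=x_*$ is exactly what \eqref{eq:subspacePI}--\eqref{eq:subspaceCramer} were designed to control. The only genuine loss beyond the constants themselves is the Jensen step in the Cram\'er-Rao direction, which explains why the lower bounds feature the unconditional mean gradient $\bbE[\nabla G(X)]$ rather than a sharper quantity.
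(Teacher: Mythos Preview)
Your argument is correct and is essentially the paper's own approach: both start from the law-of-total-variance rewriting of the Sobol' indices and then bound the conditional variance term via the subspace Poincar\'e and Cram\'er--Rao inequalities, summing componentwise to obtain Frobenius norms. The only difference is packaging: the paper invokes \Cref{thm:errbound} (applied to the function $V_s^\top G$ with trivial output reduction) as a black box, whereas you inline the relevant piece of that theorem's proof directly; the underlying Poincar\'e/Cram\'er--Rao + Jensen computation is the same in both cases.
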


The proof can be found in \Cref{app:ProofGradvsGSA}. 
Similarly to the BOED problem, GSA struggles with an objective function that is expensive to evaluate combined with a difficult combinatorial optimization problem to identify an optimal index set $\tau$ (see discussion in \Cref{sec:Objectives}). On the other hand, optimizing our derivative-based bound instead of the Sobol' indices provides us with a closed-form solution for the optimal index set. That is,
\begin{equation}\label{eq:GoalGSA_tau_closed}
	\min_{\substack{U_{-\tau_{\cl}}\in\bbR^{d\times (d-r)}\\ \tau_{\cl}\subset\{1,\ldots,d\},\; |\tau_{\cl}|=r}}\;\Tr\left(U_{-\tau_{\cl}}^\top H_X(V_s) U_{-\tau_{\cl}}\right) = \sum_{i\notin\tau_{\cl}^*} H_X(V_s)_{ii},
\end{equation}
where the index set $\tau_{\cl}^*$ maximizing the lower bound for the closed Sobol' index is given by the indices corresponding to the largest diagonal entries of $H_X(V_s)$. Correspondingly, we have 
\begin{equation}\label{eq:GoalGSA_tau_total}
	\min_{\substack{U_{\tau_{\tot}} \in\bbR^{d\times r}\\ \tau_{\tot}\subset\{1,\ldots,d\},\; |\tau_{\tot}|=r}}\;\Tr\left(U_{\tau_{\tot}}^\top H_X(V_s) U_{\tau_{\tot}}\right) = \sum_{i\in\tau_{\tot}^*} H_X(V_s)_{ii},
\end{equation}
where the index set $\tau_{\tot}^*$  minimizing the upper bound for the total Sobol' index is given by the indices corresponding to the smallest diagonal entries of $H_X(V_s)$. Note that we can handle index sets of any size for the same computational cost as the case of single-element index sets, given that only the diagonal elements of $H_X(V_s)$ need to be computed in either setting. Moreover, evaluating the respective upper and lower bound of the closed and total Sobol' index allows for complete certification of the induced error. 

\begin{remark}
	Our method generalizes the derivative-based global sensitivity measures (DGSM) introduced in \cite{sobol2009DGSM} which are defined as 
	\begin{equation*}
		\nu_i = \bbE\bigg[\left(\frac{\partial f}{\partial x_i}(X)\right)^2\bigg]
	\end{equation*}
	for a scalar function $f\colon\bbR^d\rightarrow\bbR$. The generalization to vector-valued functions has also been considered in \cite{cleaves2019derivative}.
	Setting $V_s\in\bbR^{m\times 1}$ to be a vector, the diagonal elements $H_X(V_s)_{ii}$ correspond to the DGSM $\nu_i$ of the scalar function $V_s^\top G(X)$. Our approach additionally enables sensitivity measures for goal-oriented vector-valued outputs and is not restricted to single-element sets $\tau = \{i\}$. 
	The authors in \cite{kucherenko2014DGSM_review} also derived the following DGSM-based bounds for the total Sobol' index where $X_i\sim\CN(\mu_i,\sigma_i)$
	\begin{equation*}
		\tfrac{\sigma_i^4}{(\mu_i^2 + \sigma_i^2)V} \omega_i^2
		\leq S_{i}^{\tot}
		\leq \tfrac{\sigma_i^2}{V}\nu_i,
	\end{equation*}
	where $\omega_i = \bbE[\tfrac{\partial f(X)}{\partial x_i}]$ and $V = \Var(f(X))$. For normally distributed $X_i$, we have $\overline c(X) = \overline\CC(X) = \sigma_i^2$ (see \Cref{rem:GaussianPrior}) and thus obtain a slightly improved lower bound $\frac{\sigma_i^2}{V} \omega_i^2 \leq S_{i}^{\tot} \leq \frac{\sigma_i^2}{V}\nu_i$. Note that lower-bounds for the total Sobol' index are generally receiving growing attention in GSA \cite{roustant2020parseval,roustant2019lower}.
\end{remark}

\section{Numerical Experiments}\label{sec:Numerics}

The following numerical experiments are computed with code provided at
\url{https://github.com/qchen95/Coupled-input-output-DR}.

\subsection{Conditioned Diffusion}\label{subsec:Conddiff}

Here, we consider a model for the movement of a diffusive particle acting under a double-well potential in molecular dynamics. The goal is to infer the driving force $dB_t$ applied to the particle given observations of its path $u:[0,T]\rightarrow\bbR$ with final time $T=1$. The particle movement is described by the solution of the stochastic differential equation
\begin{equation*}
	\d u_t = f(u_t)\d t + \d B_t, \qquad u_0=0,
\end{equation*}
with non-linear drift function $f(u) = u(1-u^2)/(1+u^2)$ and $B$ denoting a Wiener process. 
We discretize the differential equation using an Euler-Maruyama scheme with time step $\Delta t = 10^{-2}$ resulting in $100$ uniformly-spaced time steps. Noisy observations $Y$ are taken at each time step so that the forward model $G\colon \mathbb{R}^d \rightarrow \mathbb{R}^m$ is defined as the mapping from the incremental driving force $X$ at $d=100$ times to the particle position $u$ at $m=100$ times. We further assume a standard normal prior $\CN(0, I_d)$ for the driving force and i.i.d. additive observational noise with distribution $\CN(0,0.1)$. \Cref{subfig:conddiff_goal_early,subfig:conddiff_goal_late} illustrate some realizations of the particle path $u$. 

In this problem, we aim to identify optimal subspaces and indices of the inputs and outputs of $G$ that reduce the dimensions of the Bayesian inference problem for inferring $X$ from $Y$. To compute the diagnostic matrices $H_X(V_s)$ \eqref{eq:HX} and $H_Y(U_r)$ \eqref{eq:HY}, we use $M=10000$ prior samples and set $r=s=10$, if not stated otherwise. The alternating eigendecomposition algorithm is run for a fixed number of $10$ iterations and initiated from a random $V_s\in\bbR^{m\times s}$ with orthonormal columns when no goal is specified. We recall that the optimal subspace is given without alternation when only solving for one of $U_r$ or $V_s$. 

\begin{figure}[H]
	\centering
	\begin{subfigure}{0.49\textwidth}
		\centering
		\includegraphics{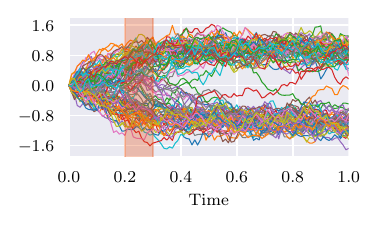}
		\caption{Driving force of interest $U_r^\top X$.\label{subfig:conddiff_goal_early}}
	\end{subfigure}
	\begin{subfigure}{0.49\textwidth}
		\centering
		\includegraphics{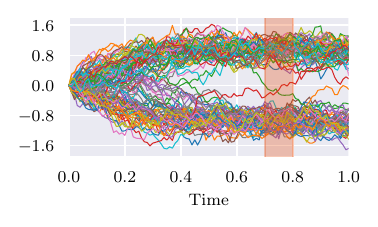}
		\caption{Observations of interest $V_s^\top G(X)$.\label{subfig:conddiff_goal_late}}
	\end{subfigure}
	\par
	\begin{subfigure}{0.49\textwidth}
		\centering
		\includegraphics{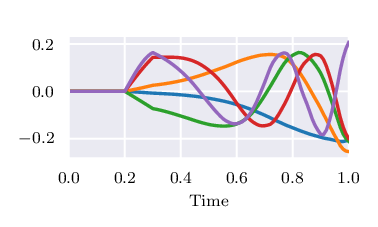}
		\caption{Reduced data $V_s^*(U_r)$.\label{subfig:conddiff_goal_Uearly}}
	\end{subfigure}
	\begin{subfigure}{0.49\textwidth}
		\centering
		\includegraphics{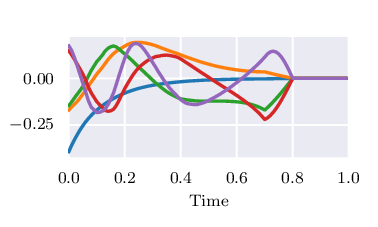}
		\caption{Reduced parameter $U_r^*(V_s)$.\label{subfig:conddiff_goal_Vlate}}
	\end{subfigure}
	\caption{Illustration of the inputs and outputs of interest (i.e., goals) within the specified orange time intervals, and their corresponding goal-oriented solutions for the vectors defining the reduced output and input spaces, respectively. Notice that the solutions correctly reflect the time causality of the problem.}\label{fig:conddiff_goal}
\end{figure}
In \Cref{fig:conddiff_goal}, we observe solutions of the goal-oriented problems. 
The goals $U_r$ and $V_s$, illustrated in \Cref{subfig:conddiff_goal_early} and \Cref{subfig:conddiff_goal_late}, project onto the driving force and, respectively, the particle path observations within the specified orange time intervals. For the goal $U_r$, the resulting observation space bases $V_s^*(U_r)$ \eqref{eq:Vs} in \Cref{subfig:conddiff_goal_Uearly} are zero up to the interval of interest. This behavior can be explained as the driving force within the prescribed time interval does not influence the particle position at earlier times. Similarly, the parameter space bases $U_r^*(V_s)$ \eqref{eq:Ur} in \Cref{subfig:conddiff_goal_Vlate} derived from the goal $V_s$ do not extend beyond the observed time interval. An explanation is that observations of the particle's path do not provide information about the driving force at future times. Thus, our goal-oriented solutions appear to capture the time causality within the inference problem correctly. 

\begin{figure}[htbp]
	\centering 
	\includegraphics{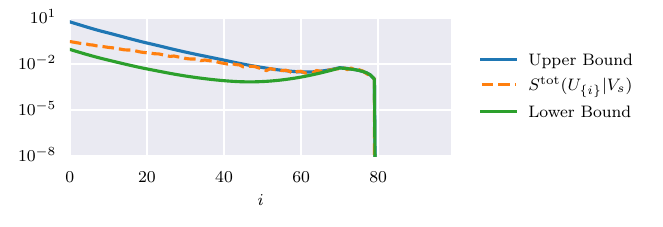}
	\caption{Goal-oriented total Sobol' indices for each index set as compared with the derived derivative-based bounds. The bounds tightly track the true value for the Sobol' indices. The bounds also take into account the time causality of the problem.}
	\label{fig:conddiff_sobol}
\end{figure}
In \Cref{fig:conddiff_sobol}, we present the goal-oriented total Sobol' indices for the output of interest $V_s^\top G(X)$ illustrated in \Cref{subfig:conddiff_goal_Vlate}. For each coordinate $i =1,\dots,d$ of the input parameter, i.e., the driving force at time $i\cdot \Delta t$, we compute $S^{\tot}(U_{\{i\}}|V_s)$ along with the derivative-based bounds in \eqref{eq:bound_totalSobol}. We present single-element input sets $\tau=\{i\}$, $1\leq i \leq d$ for easier illustration. However, we recall that the derived bounds can be computed for arbitrary parameter sets by summing up the elementwise values according to \eqref{eq:GoalGSA_tau_total}.
Both the upper and lower bounds in \Cref{fig:conddiff_sobol} tightly track the total Sobol' indices. 
The bounds also capture the problem's temporal causality, i.e., driving forces at later times exert no influence on the model's output variations during the time interval of interest.

\Cref{fig:conddiff_coupledUV} shows solutions of the coupled dimension reduction problem (top) in comparison with dimension reduction by applying PCA separately on the parameter and data space (bottom) \cite{hotelling1933PCA}. 
The coupled modes of the parameter space focus on approximating the driving force at initial times, reflecting the strong influence of the force at these times on the overall particle path. Contrarily, the modes of the data space focus on approximating the particle path at final times. This phenomenon reflects the fact that the final position of the particle reveals more about the overall driving force. The coupled eigenvectors of both spaces exhibit notable disparities from those derived through PCA.
The PCA modes of the input space reflect the white noise behavior of the driving force. The PCA modes of the observation space are more global as compared to the coupled ones and moreover they do not exhibit awareness of the inverse problem. 
\begin{figure}[htbp]
	\centering
	\begin{subfigure}[b]{0.49\textwidth}
		\centering
		\includegraphics{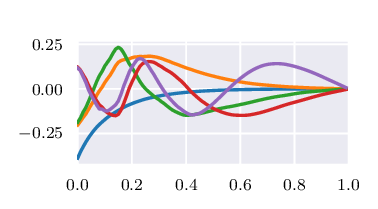}
		\caption{$U_r^{\Coupled}$}
	\end{subfigure}
	\begin{subfigure}[b]{0.49\textwidth}
		\centering
		\includegraphics{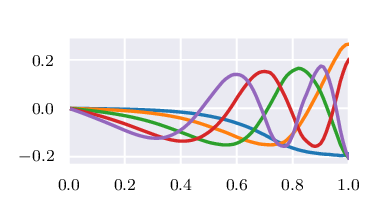}
		\caption{$V_s^{\Coupled}$}
	\end{subfigure}
	\par
	\begin{subfigure}[b]{0.49\textwidth}
		\centering
		\includegraphics{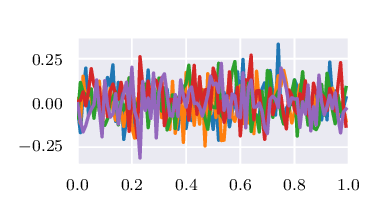}
		\caption{$U_r^{\PCA}$}
	\end{subfigure}
	\begin{subfigure}[b]{0.49\textwidth}
		\centering
		\includegraphics{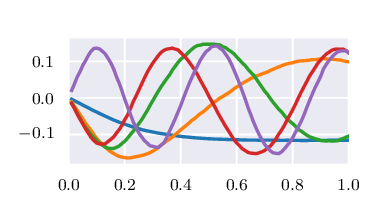}
		\caption{$V_s^{\PCA}$}
	\end{subfigure}
	\caption{Comparison of subspaces from coupled and PCA-based dimension reduction.}
	\label{fig:conddiff_coupledUV}
\end{figure}

\begin{figure}[htbp]
	\centering   
	\includegraphics{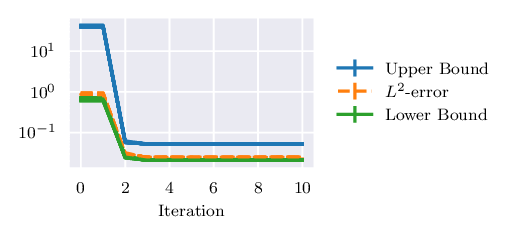}
	\caption{Convergence of of alternating eigendecomposition for $10$ random initializations.}
	\label{fig:conddiff_conv}
\end{figure}

\begin{figure}[htbp]
	\centering
	\includegraphics{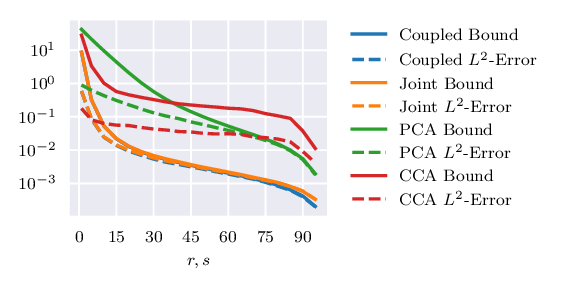}
	\caption{Comparison of model error and upper bound convergence across different dimension reduction methods as subspace dimensions $r,\; s$ increase.}
	\label{fig:conddiff_sr1D}
\end{figure}
Next, we consider the convergence of the alternating eigendecomposition algorithm. \Cref{fig:conddiff_conv} depicts the relative $L^2$-error \eqref{eq:error_decomp} as well as the relative upper \eqref{eq:error_bound} and lower bound \eqref{eq:error_lowerbound} over the iterations of the alternating eigendecomposition for $10$ random initializations of the matrices. 
We observe that minimizing the upper error bound indeed decreases the true $L^2$-error. Moreover, the alternating eigendecomposition converges after just three iterations independent of the initiation.

\Cref{fig:conddiff_sr1D} shows the convergence of the relative $L^2$-error and upper bound with increasing subspace dimensions $r$ and $s$. We compare 
the errors of our coupled dimension reduction method with other dimension reduction methods in the data and parameter spaces such as PCA, canonical correlation analysis (CCA) \cite{hotelling1992CCA} and the joint dimension reduction method \cite{baptista2022JointDR} discussed in \Cref{rem:jointDR}. Our coupled method has very similar error and upper bound values as the joint dimension reduction method. Both decrease much faster with increasing subspace dimensions and do not get stuck on a plateau like CCA.

\subsection{Burgers' Equation}

In this subsection, we demonstrate our goal-oriented BOED framework on the one-dimensional non-linear Burgers' equation
\begin{align*}
\frac{\partial u}{\partial t} + u \frac{\partial u}{\partial x} - D\frac{\partial u}{\partial x^2}= 0,
\end{align*}
with $t\in[0,1]$, $x\in[0,1]$, $D=0.001$ and periodic boundaries. Our parameters are the initial conditions $u_0(x)$. We assume a normal prior distribution for $u_0(x)$ with mean $\mu(x) = \tfrac{1}{\sqrt{2\pi}}\exp(-\tfrac{1}{2}\|0.5-x\|^2)$ and Mat\'{e}rn covariance with correlation length $l=0.1$ and smoothness $\nu=2.5$. Using a uniform space discretization our parameters are $(u_0(x_1), \ldots, u_0(x_d))\in\bbR^d$ with $d=100$. As data $y\in\bbR^m$, we use state observations at the final time $T=0.1$ with i.i.d.\thinspace measurement noise $\eta\sim\CN(0,0.01\, I_m)$ so that $m=100$. Therefore, $G\colon\mathbb{R}^d \rightarrow \mathbb{R}^m$ maps the discretized initial condition $u_0(x)$ to the discretized solution $u(x,T)$ at the final time. The left subplot in \Cref{fig:burgers_prior} shows the prior mean $\mu(x)$ together with several prior samples and also visualizes each of the goals $U_r^{(i)}$, which project the initial conditions $X$ onto the grid points $i,\ldots,i+r$ marked by an orange bar.
The right subplot in \Cref{fig:burgers_prior} illustrates the space-time model solutions $u(x,t)$ with the prior mean as initial condition $u_0(x)=\mu(x)$. 
We consider $r=5$ and the sensor placement for $s=10$ sensors.
To compute the diagnostic matrices, we use $M=1000$ prior samples, and the goal-oriented true EIG \eqref{eq:goalEIG} is computed via the double-loop importance sampling estimator from \cite{beck2018DLIS_EIG} with $30$ inner and $M$ outer loop samples making sure the estimator is sufficiently accurate for the subsequent comparison.

\begin{figure}[htbp]
	\centering
	\begin{subfigure}[t]{0.49\textwidth}
		\centering
		\includegraphics{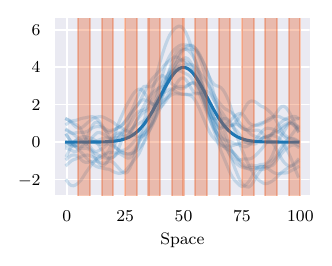}
	\end{subfigure}%
	\begin{subfigure}[t]{0.49\textwidth}
		\centering
		\includegraphics{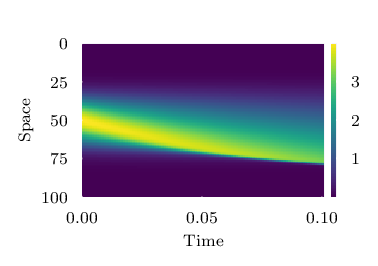}
	\end{subfigure}
	\caption{Left: Illustration of the prior mean $\mu(x)$ (solid line) and samples (transparent lines) of the initial condition parameter with orange bars marking the quantities of interest ${U_r^{(i)}}$. Right: Solutions of Burgers' equation with the prior mean as initial condition, i.e., $u_0(x) = \mu(x)$.}
	\label{fig:burgers_prior}
\end{figure}

In \Cref{fig:EIG_burgers}, we present the goal-oriented EIG for different goals $U_r^{(i)}$ and sensor placement methods. 
For each goal, we select sensors based on four different methods: the optimal sensor placement $\tau^*$ from \eqref{eq:GoalOED_tau}, the goal-oriented EIM-based sensors $\widetilde\tau$, the PCA-EIM sensors $\widetilde\tau^{\PCA}$ and sensors $\tau^F$ based on the Fedorov-Wynn algorithm with a linearized model around the prior mean composed with the linear goal operator \cite{fedorov1972theory,wynn1972results}.
The violin plots also show the EIG values for  $250$ random sensor placements. Both sensors $\tau^*$ and $\widetilde\tau$ almost always outperform the random sensors, i.e., yielding larger EIG. In general, both $\widetilde\tau$ and $\tau^*$ perform similarly well as the Fedorov-Wynn-based sensors $\tau^F$. Their performance surpasses that of $\tau^F$ for the goals $U_r^{(45)}$ and $U_r^{(55)}$, where non-linear effects may significantly compromise the accuracy of the model linearization in the Fedorov-Wynn algorithm. On the other hand, the PCA-EIM sensors $\widetilde\tau^{\PCA}$ are often only as good as average random selections, especially for goals $U_r^{(i)}$ with lower indices $i<45$. This phenomenon can be explained by looking at the actual sensor positions in \Cref{fig:GoalV_sensors}. Since the solution has a wavefront that is moving to the right, the final state has strong features on the right side of the domain. As a result, the PCA-EIM sensors are concentrated on the right side of the domain, regardless of the goal. Thus, the PCA-EIM sensors seem to be more informative for parameters of interests $U_r^{(i)}$ that also lie on the right side of the domain (large $i$) and less so for parameters of interest located on the left side of the domain (small $i$).
Generally, $\tau^*$ and $\widetilde\tau$ are similarly distributed as the Fedorov-Wynn-based sensors $\tau^F$ except for goal $U_r^{(55)}$, where the $\tau^F$ exhibits a low EIG. 
Comparing the EIM-based indices $\widetilde\tau$ with the sensors $\tau^*$ that optimize the EIG lower bound, we observe in \Cref{fig:GoalV_sensors} that $\widetilde\tau$ tends to be more spread over the domain of high variability in the goal-oriented modes $V_s^*(U_r^{(i)})$. This behaviour could explain their slightly better performance over $\tau^*$ relative to the \textit{true} EIG. 

\begin{figure}[htbp]
	\centering
	\includegraphics{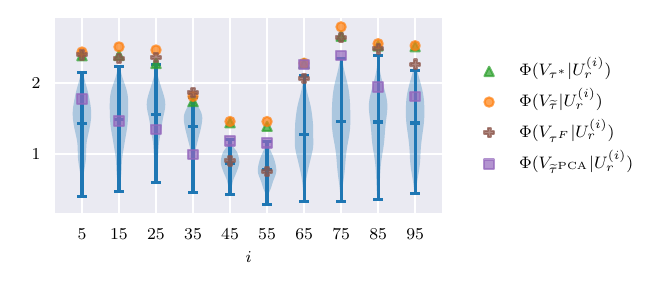}
	\caption{Goal-oriented EIG values of different goals $U_r^{(i)}$ and sensor placements. Violin plots represent $250$ random sensor placements. }
	\label{fig:EIG_burgers}
\end{figure}

\begin{figure}[htbp]
	\centering
	\begin{subfigure}[b]{\textwidth}
		\centering
		\includegraphics[width=0.47\textwidth]{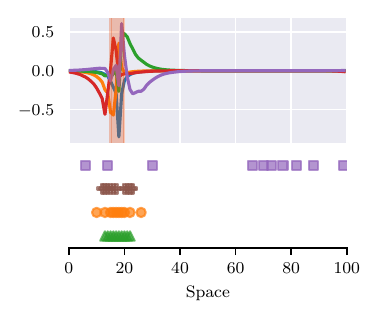}
		\includegraphics[width=0.47\textwidth]{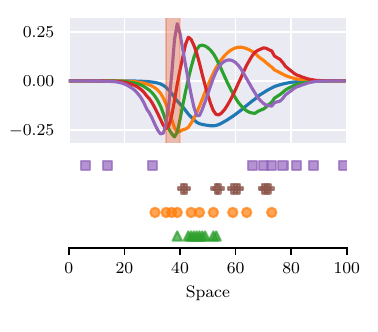}
	\end{subfigure}
	\begin{subfigure}[b]{\textwidth}
		\centering
		\includegraphics[width=0.47\textwidth]{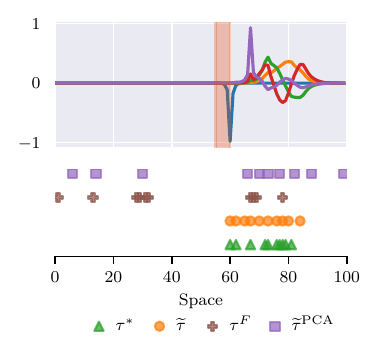}
		\includegraphics[width=0.47\textwidth]{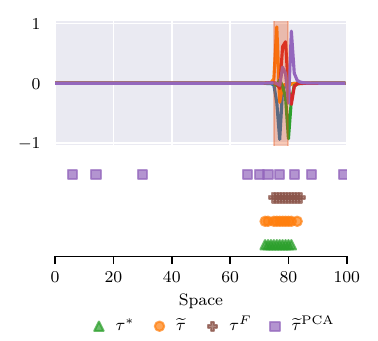}
	\end{subfigure}
	\caption{Goal-oriented modes $V_s^*(U_r)$ (denoted by lines) and sensor placements (denoted by markers at select indices below each plot) for four different parameters of interest $U_r^\top X$ marked by orange bars. }
	\label{fig:GoalV_sensors}
\end{figure}

\section{Conclusions}\label{sec:conclusion}

We derived a coupled input-output dimension reduction method that is easily computed using an alternating eigendecomposition of two diagnostic matrices. The reduced spaces are identified given only access to matrix-vector products of the model's gradient. Our method supports goal-oriented dimension reduction when some quantity of interest is prescribed in the input or the output space. We demonstrate our dimension reduction framework on Bayesian optimal experimental design and global sensitivity analysis, which can be seen as selecting the most important components of each variable instead of subspaces. In both applications, we not only bypass the expensive evaluation of the objective function for each task but also a combinatorial optimization problem by computing the relevant components from the largest diagonal entries of the diagnostic matrices.

Next, we explain a few possible directions for further developments. First, the convergence of the alternating eigendecomposition algorithm and the (adaptive) choice of the ranks $r$ and $s$ are important open questions. Second, we plan to examine quantities of interest described by some \emph{nonlinear} functions $\psi_r\colon\bbR^d\rightarrow\bbR^r$ for the input and/or $\phi_s\colon\bbR^m\rightarrow\bbR^s$ for the output. By the chain rule, the diagnostic matrices become
\begin{align*}
	H_X(\phi_s) &= \bbE\left[\nabla G(X)^\top\nabla\phi_s(G(X))^\top \nabla\phi_s(G(X)) \nabla G(X)\right],\\
	H_Y(\psi_r) &= \bbE\left[\nabla G(X)\nabla\psi_r(X)^\top \nabla\psi_r(X) \nabla G(X)^\top\right].
\end{align*}
In particular, a nonlinear output quantity of interest allows for more general goal-oriented BOED problems, while a nonlinear transformation $\psi_r$ could map the inputs to independent components (e.g. through a Knothe-Rosenblatt transform) for GSA.
The error analysis associated with such diagnostics is a natural question for future work. 
Third, our coupled dimension reduction framework can also be extended to operator learning \cite{lu2021learning,kovachki2024operator} where $G$ is considered as operator $G\colon\mathcal{U}\rightarrow\mathcal{V}$ acting on function spaces $\mathcal{U},\mathcal{V}$, typically separable Hilbert spaces. This extension is possible since the Poincar\'{e} and Cram\'{e}r-Rao inequalities are known to hold in infinite dimensions.

\section*{Acknowledgements}
The authors appreciate discussions of some of the results with Matthew Li.
The authors acknowledges support from the ANR JCJC project MODENA (ANR-21-CE46-0006-01). RB also acknowledges support from from the von K\'{a}rm\'{a}n instructorship at Caltech, the Air Force Office of Scientific Research MURI on “Machine Learning and Physics-Based Modeling
and Simulation” (award FA9550-20-1-0358) and a Department of Defense (DoD) Vannevar Bush Faculty Fellowship (award N00014-22-1-2790) held by Andrew M. Stuart.


\appendix

\section{Linking the $L^2$-error of $\widetilde G$}

Recall Proposition 1 from \cite{baptista2022JointDR}. 

\begin{proposition}\label{prop:optimal_pi_tilde}
	Set $\widetilde\pi_{X|Y} = \pi_{X_r|Y_s}\pi_{X_\perp|X_r}$, where $X_\perp = U_\perp^\top X$ is the orthogonal complement of $X_r$ with $U_\perp\in\bbR^{d\times (d-r)}$ spanning $\Ker(U_r^\top)$. Then, we have
	\begin{equation*}
		\bbE_Y\left[\KL\left(\pi_{X|Y}\|\,\widetilde\pi_{X|Y}\right)\right] \leq \bbE_Y\left[\KL\left(\pi_{X|Y}\|\,\widehat\pi_{X|Y}\right)\right] 
	\end{equation*}
	for any approximate posterior of the form $\widehat\pi_{X|Y}(x|y) = f_1(x_r,y_s)f_2(x_\perp,x_r)$ with non-negative functions $f_1,\; f_2$.
\end{proposition}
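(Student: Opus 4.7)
The plan is to argue that without loss of generality, an approximate posterior of the factored form $\widehat\pi_{X|Y}(x|y) = f_1(x_r,y_s) f_2(x_\perp,x_r)$ may be replaced by a product of two bona fide conditional densities, and then to minimize the objective separately over each factor using the Gibbs inequality.

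First, I would show the reduction: since $\widehat\pi_{X|Y}(\cdot|y)$ integrates to one in $x$ for each $y$, set $Z(x_r) := \int f_2(x_\perp,x_r)\,\d x_\perp$ (assumed positive where $f_1(x_r,y_s)>0$, else the factorization is trivial), define the conditional density $h(x_\perp|x_r) := f_2(x_\perp,x_r)/Z(x_r)$, and absorb $Z(x_r)$ into the first factor to obtain $q(x_r|y_s) := f_1(x_r,y_s) Z(x_r)$, which is a conditional density by the normalization constraint. Hence I may assume
\begin{equation*}
\widehat\pi_{X|Y}(x|y) = q(x_r|y_s)\, h(x_\perp|x_r)
\end{equation*}
for conditional densities $q$ and $h$, and the task becomes showing that the choice $q=\pi_{X_r|Y_s}$ and $h=\pi_{X_\perp|X_r}$ minimizes $\bbE_Y[\KL(\pi_{X|Y}\|\widehat\pi_{X|Y})]$.

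Next, I would apply the chain rule for KL divergence to the factorization $\pi_{X|Y}(x|y) = \pi_{X_r|Y}(x_r|y)\,\pi_{X_\perp|X_r,Y}(x_\perp|x_r,y)$. Taking expectation over $Y$ splits the objective into
\begin{equation*}
\bbE_Y\bigl[\KL(\pi_{X_r|Y}(\cdot|Y)\,\|\,q(\cdot|Y_s))\bigr] + \bbE_{Y,X_r}\bigl[\KL(\pi_{X_\perp|X_r,Y}(\cdot|X_r,Y)\,\|\,h(\cdot|X_r))\bigr].
\end{equation*}
Minimization over $q$ and over $h$ decouples completely because $q$ only appears in the first term and $h$ only in the second.

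For the first term, I would expand the KL divergence, drop the entropy contribution independent of $q$, and observe that $-\bbE_Y\int \pi_{X_r|Y}(x_r|Y)\log q(x_r|Y_s)\,\d x_r$ equals $-\bbE[\log q(X_r|Y_s)]$ where the expectation is under the joint $\pi_{X_r,Y_s}$ (obtained by marginalizing out the unused part of $Y$). By Gibbs' inequality this quantity is minimized by $q(x_r|y_s) = \pi_{X_r|Y_s}(x_r|y_s)$. An identical argument, after marginalizing $Y$ completely, shows that the second term is minimized by $h(x_\perp|x_r) = \pi_{X_\perp|X_r}(x_\perp|x_r)$. Combining both minimizers yields the claimed $\widetilde\pi_{X|Y}$.

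The only subtlety I anticipate is the normalization step in the reduction, specifically handling the sets where $Z(x_r)=0$ or where the factored form is degenerate; these can be dealt with by restricting attention to the support of the marginal of $X_r$ and noting that the KL is infinite whenever $\widehat\pi_{X|Y}$ fails to dominate $\pi_{X|Y}$, so such edge cases do not improve the bound.
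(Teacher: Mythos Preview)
Your argument is correct. The normalization reduction, the chain-rule decomposition of the expected KL into two decoupled terms, and the separate minimization via Gibbs' inequality (cross-entropy is minimized by the true conditional) all go through as you describe; the edge cases you flag are indeed harmless because a failure of absolute continuity only drives the KL to $+\infty$.

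Note, however, that the paper does \emph{not} supply its own proof of this proposition: it is simply quoted as ``Recall Proposition 1 from \cite{baptista2022JointDR}.'' So there is nothing in the present paper to compare your proof against. Your approach---reducing to genuine conditional densities and then using the chain rule for KL together with the optimality of the true conditional in cross-entropy---is the standard one for this kind of structured-variational statement and is almost certainly what the cited reference does as well.
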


\subsection{Proof of \Cref{Lem:L2vsPosterior}}\label{app:L2vsPosterior}
First note that the normalized form of $\widetilde\pi_{X|Y}\propto \pi_{Y_s|X_r}\pi_X$ in \eqref{eq:def_pi_tilde} can be written as
	\begin{align*}
		\widetilde\pi_{X|Y} = \frac{\pi_{Y_s|X_r}\pi_X}{\int\pi_{Y_s|X_r}\pi_X\d x} = \frac{\pi_{Y_s|X_r}\pi_X}{\pi_{Y_s}} = \frac{\pi_{Y_s,X_r}\pi_{X}}{\pi_{X_r}\pi_{Y_s}} = \pi_{X_r|Y_s}\pi_{X_\perp|X_r}
	\end{align*}
	Thus, it corresponds to the definition in \Cref{prop:optimal_pi_tilde} and we can use it to obtain
	\begin{equation}\label{eq:proof_L2vsPosterior1}
		\bbE_Y\left[\KL\left(\pi_{X|Y}\|\,\widetilde\pi_{X|Y}\right)\right] \leq \bbE_Y\left[\KL\left(\pi_{X|Y}\|\,\widehat\pi_{X|Y}\right)\right] 
	\end{equation}  
	where $\widehat\pi_{X|Y}(x|y) \propto \exp(-\tfrac{1}{2\sigma^2}\|\widetilde G(x) - y\|^2)\pi_X(x)$ satisfies the definition in \Cref{prop:optimal_pi_tilde} with $f_1(x_r,y_s) = \tfrac{1}{Z} \exp(-\tfrac{1}{2\sigma^2}\|\widetilde g(x_r) - y_s\|^2)$ and $f_2(x_\perp,x_r) = \pi_X(x)$. Here, $Z\in\bbR$ represents a normalization constant independent of $x$ and $y$.  
	By Proposition 4.1 in \cite{zahm2021datafree} we have the following bound for any posterior $\pi_{X|Y}$ with Gaussian likelihood and its approximation $\widehat\pi_{X|Y}$ with an approximate forward model
	\begin{equation}\label{eq:proof_L2vsPosterior2}
		\bbE_Y\left[\KL\left(\pi_{X|Y}\|\,\widehat\pi_{X|Y}\right)\right] \leq \tfrac{1}{2\sigma^2} \bbE_Y\bigl[\|G(x) - \widetilde G(x)\|^2\bigr].
	\end{equation}
	Combining \eqref{eq:proof_L2vsPosterior1} and \eqref{eq:proof_L2vsPosterior2} finishes the proof.

\subsection{Proof of \Cref{Lem:L2vsEIG}}\label{app:L2vsEIG}
We have from the previous proof that $\widetilde\pi_{X|Y} = \pi_{X_r|Y_s} \pi_{X_\perp|X_r}$. We get for the goal-oriented EIG
\begin{align*}
	\Phi(V_s|U_r)
	&= \bbE_{Y}\left[ \KL\left( \pi_{X_r|Y_s}\|\,\pi_{X_r} \right) \right] 
	= \int \log\left(\frac{\pi_{X_r|Y_s}}{\pi_{X_r}}\right)\d\pi_{XY} \\
	&= \int \log\left(\frac{\widetilde\pi_{X|Y}}{\pi_{X_r}\pi_{X_\perp|X_r}}\right)\d\pi_{XY} \\
	&= \int \log\left(\frac{\pi_{Y|X}}{\pi_{Y}}\right)\d\pi_{XY} - \int \log\left(\frac{\pi_{X|Y}}{\widetilde\pi_{X|Y}}\right)\d\pi_{XY}\\
	&= \int \left(\int\log\left(\frac{\pi_{X|Y}}{\pi_X}\right)\d\pi_{X|Y}\right)\d\pi_Y - \int \left(\log\left(\frac{\pi_{X|Y}}{\widetilde\pi_{X|Y}}\right) \d\pi_{X|Y} \right)\d\pi_{Y}\\
	&= \bbE_Y\left[\KL\left(\pi_{X|Y}\|\,\pi_X\right)\right] - \bbE_{Y}\left[ \KL\left( \pi_{X|Y}\|\,\widetilde\pi_{X|Y} \right) \right].
\end{align*}
Applying \Cref{Lem:L2vsPosterior} on the last term concludes the proof.

\section{Proof of \Cref{thm:optimalG}}\label{app:ProofOptimalG}
Let us rewrite
\begin{align*}
	\bbE\big[\|G(X)- \widetilde G(X)\|^2\big]
	&= \bbE\Big[\big\| \big(G(X)- G^*(X)\big) + \big(G^*(X)- \widetilde G(X)\big)\big\|^2\Big] \\
	&= \bbE\big[\| G(X)- G^*(X)\|^2] +\bbE[\| G^*(X)- \widetilde G(X)\|^2\big] \\
	&\qquad + 2\bbE\Big[\big(G(X)- G^*(X)\big)^\top\big( G^*(X)-\widetilde  G(X)\big)\Big] .
\end{align*}
We show that the last term vanishes so that for any $\widetilde G$ it holds $\bbE[\|G(X)- \widetilde G(X)\|^2]\geq \bbE[\| G(X)- G^*(X)\|^2]$. This proves that $G^*$ is a minimizer of $\widetilde G\mapsto \bbE[\| G(X)- \widetilde G(X)\|^2]$. 
Let $V_\perp\in\bbR^{m\times(m-s)}$ be the orthogonal complement of $V_s$. 
Without loss of generality, we assume that $\widetilde G_\perp \in\Ker(V_s)$ so that $V_\perp V_\perp^\top\widetilde G_\perp = \widetilde G_\perp $. By construction, we have
\begin{align*}
	G(X)- G^*(X)
	&= V_s \big( V_s^\top G(X)- g^*(U_r^\top x) \big) +  V_\perp V_\perp^\top \big(G(X)-\bbE[G(X)]\big), \\
	G^*(X)-\widetilde  G(X)
	&= V_s \big( g^*(U_r^\top x) - \widetilde g(U_r^\top x) \big) +  V_\perp V_\perp^\top\big(G_\perp^*  - \widetilde G_\perp \big). 
\end{align*}
Thus, we get that 
\begin{align*}
	\bbE\Big[\big(G(X) - G^*(X)\big)^\top& \big( G^*(X)-\widetilde  G(X)\big)\Big]\\
	&= \bbE\left[ \big(V_s^\top G(X)- g^*(U_r^\top X)\big)^\top\big(g^*(U_r^\top X) - \widetilde g(U_r^\top X)\big) \right]\\
	&\qquad + \underbrace{\bbE\Big[ \big( G(X)-\bbE[G(X)] \big)^\top V_\perp V_\perp^\top \big( G_\perp^*  - \widetilde G_\perp \big)\Big]}_{=0},
\end{align*}
where the last term vanishes by taking the expectation of $G(X)-\bbE[G(X)]$.
Next, we exploit the fundamental property of the conditional expectation $\bbE[ G(X) f(U_r^\top X)] = \bbE[ \bbE[G(X)|U_r^\top X] f(U_r^\top X)] $ which holds for any function $f\colon\bbR^r\rightarrow\bbR$. Considering the definition of $g^*$ and setting $f(U_r^\top X) =V_s (g^*(U_r^\top X) - \widetilde g(U_r^\top X)) $, we obtain
\begin{equation*}
	\bbE\Big[\big(G(X)- G^*(X)\big)^\top\big( G^*(X)-\widetilde  G(X)\big)\Big]
	= \bbE\left[ \big(G(X)-  \bbE[ G(X) |\, U_r^\top X ] \big)^\top f(U_r^\top X)  \right] = 0.
\end{equation*}

To prove \eqref{eq:error_decomp}, we again use the fundamental property of the conditional expectation to write
\begin{align*}
	\bbE\big[\|G(X)& - G^*(X)\|^2\big]
	= \bbE\Big[\big\|G(X)-  V_s V_s^\top \bbE[\left. G(X) \right| U_r^\top X ] -(I_m - V_sV_s^\top) \bbE[G(X)]\big\|^2\Big] \\
	&= \bbE\Big[\big\| \big(G(X)-\bbE[G(X)]\big) -  V_s V_s^\top \big(\bbE[\left. G(X) \right| U_r^\top X ] - \bbE[G(X)]\big) \big\|^2\Big] \\
	&= \bbE\Big[\big\| G(X)-\bbE[G(X)]  \big\|^2\Big] - \bbE\Big[\big\| V_s^\top \big(\bbE[\left. G(X) \right| U_r^\top X ] - \bbE[G(X)]\big) \big\|^2\Big]  \\
	&= \Tr\left(\Cov\left(G(X)\right)\right) - \Tr\left(V_s^\top \Cov\left(\bbE\left[\left. G(X)\right| U_r^\top X \right]\right) V_s\right) ,
\end{align*}
which concludes the proof.

\section{Proof of \Cref{thm:errbound}}\label{app:ProofErrBound}

Let $V_\perp\in\bbR^{m\times(m-s)}$ be the orthogonal complement of $V_s$. We have
\begin{align}
	\bbE\big[\|G(X)& - G^*(X)\|^2\big]
	= \bbE\Big[\bigl\|G(X)-  V_s V_s^\top \bbE[\left. G(X)\right| U_r^\top X ] - V_\perp V_\perp^\top \bbE[G(X)]\bigr\|^2\Big] \nonumber\\
	&= \bbE\Big[\bigl\| V_\perp V_\perp^\top \big(G(X)-\bbE[G(X)]\big) -  V_s V_s^\top \bigl(\bbE[\left. G(X) \right| U_r^\top X ] - G(X)\bigr) \bigr\|^2\Big] \nonumber\\
	&= \bbE\Big[\big\| V_\perp V_\perp^\top\big(G(X)-\bbE[G(X)]\big) \big\|^2\Big] + \bbE\Big[\bigl\| V_s V_s^\top \bigl(\bbE[ \left. G(X) \right| U_r^\top X ] - G(X)\bigr) \bigr\|^2\Big] \label{eq:proof_errbound1}
\end{align}
Using the Poincar\'{e} inequality \eqref{eq:PI} for the component $f_i(X) = [V_\perp V_\perp^\top G(X)]_i$ yields
\begin{align}
	\bbE\Big[\bigl\| V_\perp V_\perp^\top \big(G(X)-\bbE[G(X)]\big) \bigr\|^2\Big]
	&=\sum_{i=1}^m \bbE\left[ \big( f_i(X)-\bbE[f_i(X)]\big)^2\right] \nonumber\\
	&\leq\sum_{i=1}^m  \CC(X)\, \bbE\left[\|  \nabla f_i(X)\|^2\right] \nonumber\\
	&=  \CC(X)\, \bbE\Bigl[ \big\| (I_m - V_sV_s^\top) \nabla G(X) \big\|_F^2\Bigr] \nonumber\\
	&=  \CC(X)  \Big(\bbE\left[ \| \nabla G(X)\|_F^2\right]  - \bbE\left[ \| V_s^\top \nabla G(X) \|_F^2\right] \Big) \label{eq:proof_errbound2}
\end{align}
Moreover, the subspace Poincar\'{e} inequality \eqref{eq:subspacePI} for $f_i(X) = [V_s V_s^\top G(X)]_i$ yields
\begin{align}
	\bbE\Big[\big\| V_s V_s^\top \big(\bbE[\left. G(X) \right| U_r^\top X ]& - G(X)\big) \big\|^2\Big]
	=\sum_{i=1}^m \bbE\Big[\bbE\left[\left. ( f_i(X)-\bbE[\left.f_i(X)\right| U_r^\top X])^2 \right| U_r^\top X \right]\Big] \nonumber\\
	&\leq\sum_{i=1}^m \bbE\Big[\CC(X_\perp|\,X_r)\, \bbE\big[\left.\| (I_d - U_r U_r^\top) \nabla f_i(X)\|^2 \right| U_r^\top X \big]\Big]\nonumber \\
	&\leq\sum_{i=1}^m \overline{\CC}(X)\, \bbE\left[\| (I_d-U_rU_r^\top) \nabla f_i(X)\|^2\right]\nonumber \\
	&= \overline{\CC}(X)\,  \bbE\left[ \big\| V_sV_s^\top \nabla G(X) (I_d-U_rU_r^\top) \big\|_F^2\right] \nonumber\\
	&= \overline{\CC}(X) \Bigl( \bbE\left[ \| V_s^\top \nabla G(X)  \|_F^2 \right] - \bbE\left[ \| V_s^\top \nabla G(X)  U_r \|_F^2  \right] \Bigr) \label{eq:proof_errbound3}
\end{align}
Finally, since $ \CC(X)\leq\overline{\CC}(X)$, injecting \eqref{eq:proof_errbound2} and \eqref{eq:proof_errbound3} in \eqref{eq:proof_errbound1} yields the upper bound \eqref{eq:error_bound}.
The lower bound \eqref{eq:error_lowerbound} is obtained in the same way by permuting the expectation and the norm in the above calculation. This concludes the proof.

\section{Cram\'{e}r-Rao-like Inequality}\label{app:ProofCRI}

\begin{lemma}\label{lem:CRI} 
Let $X$ be a random variable on $\bbR^d$ with Lebesgue density $\pi_X$ such that $\supp(\pi_X) = \bbR^d$. Let $f\colon\bbR^d \rightarrow \bbR$ be a differentiable function. Then it holds
	\begin{equation}\label{eq:CRI}
		c(X)\big\|\bbE\left[\nabla f(X)\right]\big\|^2 
		\leq
		\bbE\big[\big(f(X)-\bbE[f(X)]\big)^2\big] ,
	\end{equation}
	where $ c(X)= \lambda_{\max }(\bbE[\nabla \log \pi_X(X) \nabla \log \pi_X(X)^\top])^{-1} $.
\end{lemma}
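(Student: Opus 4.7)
The plan is to mimic the classical Cram\'{e}r-Rao derivation, using the score function $\nabla\log\pi_X$ as the crucial ingredient. First, I would establish the score identity
\begin{equation*}
\bbE[\nabla f(X)] = -\bbE\bigl[(f(X)-\bbE[f(X)])\,\nabla\log\pi_X(X)\bigr],
\end{equation*}
which follows from integration by parts: since $\supp(\pi_X)=\bbR^d$, we have $\int \nabla f(x)\pi_X(x)\,dx = -\int f(x)\nabla\pi_X(x)\,dx = -\bbE[f(X)\nabla\log\pi_X(X)]$, and the constant $\bbE[f(X)]$ can be subtracted because $\bbE[\nabla\log\pi_X(X)]=\int\nabla\pi_X(x)\,dx=0$.

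Second, let $w=\bbE[\nabla f(X)]\in\bbR^d$ and $F=\bbE[\nabla\log\pi_X(X)\nabla\log\pi_X(X)^\top]$. Testing the score identity against $w$ and applying the Cauchy-Schwarz inequality in $L^2(\pi_X)$ gives
\begin{equation*}
\|w\|^4 = \bigl(w^\top\bbE[\nabla f(X)]\bigr)^2 \leq \bbE\bigl[(f(X)-\bbE[f(X)])^2\bigr]\cdot w^\top F w.
\end{equation*}
Bounding $w^\top F w \leq \lambda_{\max}(F)\|w\|^2$ and dividing through by $\|w\|^2$ (the bound is trivial when $w=0$) yields
\begin{equation*}
\lambda_{\max}(F)^{-1}\,\|w\|^2 \leq \bbE\bigl[(f(X)-\bbE[f(X)])^2\bigr],
\end{equation*}
which is exactly \eqref{eq:CRI} with the claimed constant $c(X)=\lambda_{\max}(F)^{-1}$.

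The main technical obstacle is justifying the integration by parts with no boundary terms, since here we work on all of $\bbR^d$ under only mild regularity on $\pi_X$ and $f$. A standard workaround is to argue by truncation: restrict to balls $B_R\subset\bbR^d$, apply the divergence theorem, and send $R\to\infty$ using integrability of $f\nabla\pi_X$ (which is implied if $\bbE[(f(X)-\bbE f(X))^2]$ and $w^\top F w$ are both finite, as otherwise the inequality is trivially true). A secondary but minor point is handling the degenerate case $\lambda_{\max}(F)=\infty$, in which $c(X)=0$ and the inequality is trivial, so one may assume $F$ has finite operator norm throughout.
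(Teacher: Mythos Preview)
Your proof is correct and follows essentially the same route as the paper: integration by parts to obtain the score identity $\bbE[\nabla f(X)]=-\bbE[(f(X)-\bbE f(X))\nabla\log\pi_X(X)]$, then Cauchy--Schwarz and the bound $w^\top F w\le\lambda_{\max}(F)\|w\|^2$. The only cosmetic difference is that the paper tests against an arbitrary unit vector $\alpha$ and takes a supremum, whereas you test directly against $w=\bbE[\nabla f(X)]$; your discussion of the boundary terms and the degenerate case $\lambda_{\max}(F)=\infty$ is in fact more careful than the paper's.
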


\begin{proof}
	By partial integration, it holds for any differentiable function $h\colon\bbR^d\rightarrow\bbR$ 
	\begin{align}
		\bbE[\nabla h(X)] & =\int \nabla h(x) \pi_X(x) \d x \nonumber 
		 =-\int h(x) \nabla \pi_X(x) \d x \nonumber  \\
		& =-\int h(x) \nabla \log \pi_X(x) \pi_X(x) \d x 
		 =-\bbE\left[h(X) \nabla \log \pi_X(X)\right] \label{eq:proof_Cramer-Rao1}
	\end{align}
	In the second to last equality we used the chain rule $\nabla \log h(x) = \frac{\nabla h(x)}{h(x)}$.
	Let $\alpha \in \bbR^d$ be such that $\|\alpha\|=1$. Applying \eqref{eq:proof_Cramer-Rao1} to $h(x) = f(x) - \bbE[f(X)]$ and using the Cauchy-Schwarz inequality yields 
	\begin{align*}
		\left|\bbE[\nabla f(X)]^\top \alpha\right|^2 
		& = \left|\bbE[\nabla h(X)]^\top \alpha\right|^2 
		 =\bbE\left[h(X)\left(\nabla \log \pi_X(X)^\top \alpha\right)\right]^2 \\
		& \leq \bbE\left[h(X)^2\right] \bbE\left[\nabla \log \pi_X(X)^\top \alpha\right]^2 \\
		& =\Var(f(X))\, \alpha^\top \bbE\left[\nabla \log \pi_X(X) \nabla \log \pi_X(X)^\top\right] \alpha \\
		& \leq \Var(f(X))\, \lambda_{\max }\left(\bbE\left[\nabla \log \pi_X(X) \nabla \log \pi_X(X)^\top\right]\right)
	\end{align*}
	Thus, we obtain inequality \eqref{eq:CRI}.
\end{proof}

We call \eqref{eq:CRI} the Cram\'{e}r-Rao-like inequality since it poses a similar lower bound to the variance involving the inverse Fisher information matrix in $ c(X)$ \cite{rao1945AccuracyAttainable}. 
The following Lemma establishes sufficient conditions for every conditional distribution of $X_\perp$ to satisfy the Cram\'{e}r-Rao inequality, i.e., $\overline c(X)\geq 0$. 

\begin{lemma}[see \Cref{prop:control_constants}]\label{lem:control_subspaceCRI}
	Let $X$ be a random variable on $\bbR^d$ with Lebesgue density $\pi_X$ such that $\supp(\pi_X) = \bbR^d$ and there exists $\eta<\infty$ so that 
	\begin{equation}\label{eq:CRI_assumption_appendix}
		-\nabla^2\log\pi_X(x) \preceq \eta\, I_d . 
	\end{equation}
	Then there exists a positive constant $ \overline{c}(X) \geq 1/\eta$ such that 
	\begin{equation}\label{eq:subspaceCRI}
		\overline{c}(X) \left\| (I_d-U_rU_r^\top)\bbE[\nabla f(X)] \right\|^2
		\leq 
		\bbE\bigl[ \left(f(X)-\bbE[\left. f(X)\right|U_r^\top X]\right)^2\bigr] ,
	\end{equation}
	holds for any differentiable function $f: \bbR^d \rightarrow \bbR$ with $\bbE[f(X)^2]<\infty$ and $\bbE[\|\nabla f(X)\|^2]<\infty$ and for any matrix $U_r\in\bbR^{d\times r}$ with orthogonal columns.
\end{lemma}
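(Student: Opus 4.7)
The plan is to reduce the subspace Cramér-Rao inequality to the scalar inequality of \Cref{lem:CRI} applied to the conditional distribution $\pi_{X_\perp|X_r=x_r}$, by first showing that the Hessian bound \eqref{eq:CRI_assumption_appendix} is inherited by every such conditional density. Concretely, for any $U_r$ with orthonormal columns, I would complete it to an orthogonal matrix $[U_r, U_\perp]$ with $U_\perp \in \bbR^{d\times(d-r)}$ and write $x = U_r x_r + U_\perp x_\perp$. The conditional density satisfies $\pi_{X_\perp|X_r}(x_\perp|x_r) = \pi_X(x)/\pi_{X_r}(x_r)$, so its log-Hessian in $x_\perp$ equals $U_\perp^\top \nabla^2 \log \pi_X(x) U_\perp$, and \eqref{eq:CRI_assumption_appendix} yields $-\nabla_{x_\perp}^2 \log \pi_{X_\perp|X_r}(\cdot|x_r) \preceq \eta\, I_{d-r}$ pointwise and uniformly in $x_r$ and $U_r$.

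Next, I would invoke the Fisher-information identity $\bbE[\nabla \log \pi(Z)\, \nabla \log \pi(Z)^\top] = \bbE[-\nabla^2 \log \pi(Z)]$, obtained by integration by parts from $-\nabla^2 \log \pi = \nabla \log \pi\, \nabla \log \pi^\top - \nabla^2 \pi / \pi$ together with the vanishing of $\int \nabla^2 \pi \, dx$ at infinity. Applied to each conditional $\pi_{X_\perp|X_r}(\cdot|x_r)$ and combined with the Hessian bound above, this gives that the Fisher information matrix of every conditional is dominated by $\eta\, I_{d-r}$. Since $\supp(\pi_X) = \bbR^d$, every conditional has full support as well, and \Cref{lem:CRI} then yields $c(X_\perp|X_r=x_r) \geq 1/\eta$ uniformly in $x_r$ and $U_r$; by the definition of $\overline c(X)$ in \Cref{thm:errbound}, this is exactly $\overline c(X) \geq 1/\eta$.

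Finally, I would apply \Cref{lem:CRI} conditionally to the partial function $x_\perp \mapsto f(U_r x_r + U_\perp x_\perp)$, whose gradient is $U_\perp^\top \nabla f(x)$, to obtain for each $x_r$
\begin{equation*}
c(X_\perp|X_r=x_r)\,\big\|\bbE[U_\perp^\top \nabla f(X)\,|\,X_r=x_r]\big\|^2 \leq \bbE\big[(f(X)-\bbE[f(X)|X_r=x_r])^2\,\big|\,X_r=x_r\big].
\end{equation*}
Taking expectation over $X_r$, lower-bounding $c(X_\perp|X_r)$ by $\overline c(X)$, and then applying Jensen's inequality $\|\bbE[Z]\|^2 \leq \bbE[\|\bbE[Z|X_r]\|^2]$ with $Z = U_\perp^\top \nabla f(X)$ (via the tower property) yields $\overline c(X)\, \|U_\perp^\top \bbE[\nabla f(X)]\|^2 \leq \bbE[(f(X)-\bbE[f(X)|X_r])^2]$. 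Since $\|U_\perp^\top v\|^2 = \|(I_d-U_r U_r^\top)v\|^2$, this is precisely \eqref{eq:subspaceCRI}.

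The main obstacle is rigorously justifying the vanishing boundary term $\int \nabla^2 \pi_{X_\perp|X_r}(x_\perp|x_r)\, dx_\perp = 0$ in the Fisher-information identity for \emph{every} conditional density, which requires mild decay of $\pi_X$ beyond the upper Hessian bound alone. Under the standing assumption $\supp(\pi_X) = \bbR^d$ together with the implicit integrability hypotheses $\bbE[f(X)^2]<\infty$ and $\bbE[\|\nabla f(X)\|^2]<\infty$, this can be handled by a truncation argument analogous to the one underlying the proof of \Cref{lem:CRI}.
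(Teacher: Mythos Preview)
Your proposal is correct and follows essentially the same route as the paper: compute the Fisher information of the conditional $\pi_{X_\perp|X_r}$ as $-U_\perp^\top\,\bbE[\nabla^2\log\pi_X(X)\mid X_r]\,U_\perp$ via integration by parts, bound it by $\eta\,I_{d-r}$ using \eqref{eq:CRI_assumption_appendix}, apply the Cram\'er--Rao inequality conditionally, and finish by taking expectation over $X_r$ together with Jensen. The only cosmetic difference is that you package the conditional step as a direct invocation of \Cref{lem:CRI}, whereas the paper rewrites the Cauchy--Schwarz argument inline and takes the supremum over a unit direction $\alpha$ at the end; both yield the same inequality.
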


\begin{proof}
	Let $U_\perp\in\bbR^{d\times(d-r)}$ be orthogonal to $U_r$ so that $X = U_r X_r + U_\perp X_\perp$. Further, let $\nabla_{\perp}$ denote the derivative with respect to the argument $X_{\perp}$. We write $\bbE_\perp$ and $\Var_\perp$ for the expectation and variance with respect to the conditional density $\pi_{X_\perp|X_r}$. 
	From the proof of \Cref{lem:CRI}, we have for any differentiable function $h\colon\bbR^d\rightarrow\bbR$
	\begin{equation*}
		\bbE_\perp\left[ \nabla_{\perp} h\left(X_{\perp}, x_r\right) \right]=-\bbE_\perp\left[h\left(X_{\perp}, x_r\right) \nabla_{\perp} \log \pi_{X_\perp|X_r}\left(X_{\perp} | x_r\right) \right]
	\end{equation*}
	Let $\alpha \in \bbR^d$ such that $\|\alpha\|=1$. Similar to the proof of \Cref{lem:CRI}, we set $h(x_\perp,x_r) = f(x_\perp,x_r) - \bbE_\perp[f(X_\perp,x_r)]$ to be the centered version of $f$ and obtain
	\begin{align}
		\left|\bbE_\perp\left[\nabla_{\perp} f\left(X_{\perp}, x_r\right) \right]^\top \alpha\right|^2 & = \left|\bbE_\perp\left[\nabla_{\perp} h\left(X_{\perp}, x_r\right) \right]^\top \alpha\right|^2 \nonumber\\
		& \leq  \bbE_\perp\left[ h\left(X_{\perp}, x_r\right)^2\right] \bbE_\perp\left[\left(\nabla_{\perp} \log \pi_{X_\perp|X_r}\left(X_{\perp} | x_r\right)^{\top} \alpha\right)^2 \right] \nonumber\\
		& = \Var_\perp\left(f\left(X_{\perp}, x_r\right)\right) \alpha^\top \CI_{\perp}\left(x_r\right) \alpha , \label{eq:tmp23985713}
	\end{align}
	where
	\begin{align*}
		\CI_{\perp}\left(x_r\right) & =\int \nabla_{\perp} \log \pi_{X_\perp|X_r}\left(x_{\perp}| x_r\right) \nabla_{\perp} \log \pi_{X_\perp|X_r}\left(x_{\perp} |  x_r\right)^{\top} \d \pi_{X_\perp|X_r} \\
		& =\int \nabla_{\perp} \pi_{X_\perp|X_r}\left(x_{\perp} |  x_r\right) \nabla_{\perp} \log \pi_{X_\perp|X_r}\left(x_{\perp} |  x_r\right)^{\top} \d x_{\perp} \\
		& =-\int \nabla_{\perp}^2 \log \pi_X\left(x_{\perp} |  x_r\right) \pi_{X_\perp|X_r}\left(x_{\perp} |  x_r\right)\d x_{\perp} \\
		& =-U_{\perp}^{\top} \int \nabla^2 \log \pi_X(x) \d \pi_{X_\perp|X_r}\, U_{\perp} .
	\end{align*}
	The second line uses the chain rule for $\nabla\log$ while the third uses partial integration.
	In the last line, we apply the relationship $\log \pi_{X_\perp|X_r} = \log\pi_X - \log\pi_{X_r}$ as well as the chain rule for $\nabla_{\perp}$.
	Due to the assumption $-\nabla^2 \log \pi_X(x) \preceq \eta\, I_d$, we have 
	\begin{equation*}
		\CI_{\perp}\left(x_r\right) \preceq \eta\, I_{d-r}
	\end{equation*}
	for any $U_r$.
	Thus, Equation \eqref{eq:tmp23985713} yields 
	\begin{equation*}
		\left|\bbE_\perp\left[\nabla_{\perp} f\left(X_{\perp}, x_r\right) \right]^\top \alpha\right|^2 \leq \eta\, \Var_\perp\left(f\left(X_{\perp}, x_r\right)\right)
	\end{equation*}
	so that taking the expectation over the marginal $\pi_{X_r}$ and applying Jensen's inequality on the left hand side, we get
	\begin{equation*}
		\left|\alpha^{\top} U_{\perp}^{\top} \int \nabla f(x) \d \pi_X\right|^2 \leq \eta\, \bbE\bigl[\left(f(X)-\bbE[\left.f(X) \right|  U_r^\top X]\right)^2\bigr]
	\end{equation*}
	Taking the supremum over $\|\alpha\|=1$ finally yields
	\begin{equation*}
		\frac{1}{\eta}\left\|(I_d-U_rU_r^\top)\bbE\left[ \nabla f(X) \right]\right\|^2 \leq \bbE\bigl[\left(f(X)-\bbE[\left.f(X)\right|  U_r^\top X]\right)^2\bigr]
	\end{equation*}
	for any $U_{r}$.
\end{proof}

\begin{remark}\label{rem:interpret_CRI_assumption}
	Assumption \eqref{eq:CRI_assumption_appendix} is equivalent to saying that the function $h(x) := \frac{\eta}{2}\|x\|^2 + \log\pi_X(x)$ is convex. This implies that $h(x)\geq h(y)+\nabla h(y)^\top(x-y)$ for all $x,y\in\bbR^d$ and, by letting $y=0$, we deduce that
	\begin{equation*}
		\pi_X(x) \geq C \exp\left( - \frac{\eta}{2}\|x- \eta^{-1}\nabla\log\pi_X(0) \|^2 \right),
	\end{equation*}
	holds for all $x\in\bbR^d$, where $C>0$ is a constant independent of $x$. This means that if \eqref{eq:CRI_assumption} holds, then the density $\pi_X$ is necessarily above some rescaled Gaussian density.
\end{remark}

\section{Proof of \Cref{prop:GradvsGSA}}\label{app:ProofGradvsGSA}

On one side, the law of total variance enables us to write
\begin{align*}
	S^{\cl}(U_{\tau}|V_s) 
	&= 1-\frac{\bbE\bigl[\| V_s^\top G(X)-\bbE[\left.V_s^\top G(X)\right| U_{\tau}^\top X] \|^2\bigr]}{\Tr\left(\Cov\left(V_s^\top G(X)\right)\right)}  \\
	S^{\tot}(U_{\tau}|V_s) 
	&= \frac{\bbE\bigl[\| V_s^\top G(X)-\bbE[\left.V_s^\top G(X)\right| U_{-\tau}^\top X] \|^2\bigr]}{\Tr\left(\Cov\left(V_s^\top G(X)\right)\right)} .
\end{align*}
On the other side, applying \Cref{thm:errbound} with $V_s^\top G(X)$ instead of $G(X)$ yields
\begin{align*}
	\bbE\big[\| V_s^\top G(X) &- \bbE[\left. V_s^\top G(X)\right| U_{\tau}^\top X]\|^2\big]\\ 
	&\leq \overline{\CC}(X) \Big(  \bbE\left[\| V_s^\top\nabla G(X) \|_F^2\right]-  \bbE\bigl[\| V_s^\top \nabla G(X) U_{\tau} \|_F^2\bigr] \Big)  \\
	&= \overline{\CC}(X)   \Tr( U_{-\tau}^\top H_X(V_s) U_{-\tau})   \\
	\bbE\big[\| V_s^\top G(X)&- \bbE[\left.V_s^\top G(X)\right| U_{\tau}^\top X]\|^2\big] \\
	&\geq \overline{c}(X) \Big( \big\|V_s^\top\bbE\left[ \nabla G(X) \right]\big\|_F^2 - \left\| V_s^\top \bbE\bigl[\nabla G(X) \bigr] U_{\tau} \right\|_F^2 \Big) \\
	&= \overline{c}(X)  \left\|V_s^\top\bbE\left[ \nabla G(X) \right] U_{-\tau} \right\|_F^2 
\end{align*}
By combining these relations we directly obtain \eqref{eq:bound_closedSobol} and, by replacing $U_{\tau}$ with $U_{-\tau}$ in the above inequalities, we obtain \eqref{eq:bound_totalSobol}. This concludes the proof.


\bibliographystyle{siamplain}
\bibliography{references.bib}

\end{document}